\definecolor{mine}{RGB}{205, 232, 248}
\newtheorem{assumption}{Assumption}
\crefname{assumption}{assumption}{assumptions}
\begin{document}
\title{Eigensubspace of Temporal-Difference Dynamics and How It Improves Value Approximation in Reinforcement Learning}

\titlerunning{Eigensubspace of TD 
\& How It Improves Value Approximation in RL}
\toctitle{Eigensubspace of Temporal-Difference Dynamics and How It Improves Value Approximation in Reinforcement Learning}
%
\author{Qiang He\inst{1}$^{\text{(\Letter)}}$ \and Tianyi Zhou\inst{2} \and Meng Fang\inst{3} \and Setareh Maghsudi\inst{1}}
\authorrunning{Q. He et. al.}
\tocauthor{Qiang He, Tianyi Zhou, Meng Fang, Setareh Maghsudi}
%
\institute{University of Tübingen, Tübingen, Germany \\\email{\{qiang.he, setareh.maghsudi\}@uni-tuebingen.de} \and  University of Maryland, College Park, USA \\\email{tianyi@umd.edu}\and University of Liverpool, Liverpool, UK\\\email{Meng.Fang@liverpool.ac.uk}}
\maketitle              
%




\begin{abstract}

We propose a novel value approximation method, namely ``\underline{E}igensubspace \underline{R}egularized \underline{C}ritic (ERC)'' for deep reinforcement learning (RL). ERC is motivated by an analysis of the dynamics of Q-value approximation error in the Temporal-Difference (TD) method, which follows a path defined by the 1-eigensubspace of the transition kernel associated with the Markov Decision Process (MDP). It reveals a fundamental property of TD learning that has remained unused in previous deep RL approaches. In ERC, we propose a regularizer that guides the approximation error tending towards the 1-eigensubspace, resulting in a more efficient and stable path of value approximation. Moreover, we theoretically prove the convergence of the ERC method. Besides, theoretical analysis and experiments demonstrate that ERC effectively reduces the variance of value functions. Among 26 tasks in the DMControl benchmark, ERC outperforms state-of-the-art methods for 20. Besides, it shows significant advantages in Q-value approximation and variance reduction. Our code is available at~\href{https://sites.google.com/view/erc-ecml23/}{https://sites.google.com/view/erc-ecml23/}.

\end{abstract}
\section{Introduction}

In recent years, deep reinforcement learning (RL), which is built upon the basis of the Markov decision process (MDP), has achieved remarkable success in a wide range of sequential decision-making tasks~\citep{rl}, including board games \citep{alpha}, video games~\citep{nature_dqn, starcraft}, and robotics manipulation~\citep{sac}. Leveraging the rich structural information in MDP, such as the Markov assumption~\citep{rl}, the stochasticity of transition kernel \citep{agarwal2019reinforcement}, and low-rank MDP~\citep{low-rank-1, lowrank2, low-rank-3, low-rank-4}, supports designing efficient RL algorithms. 

\begin{figure}
\centering
\includegraphics[width=1.0\textwidth]{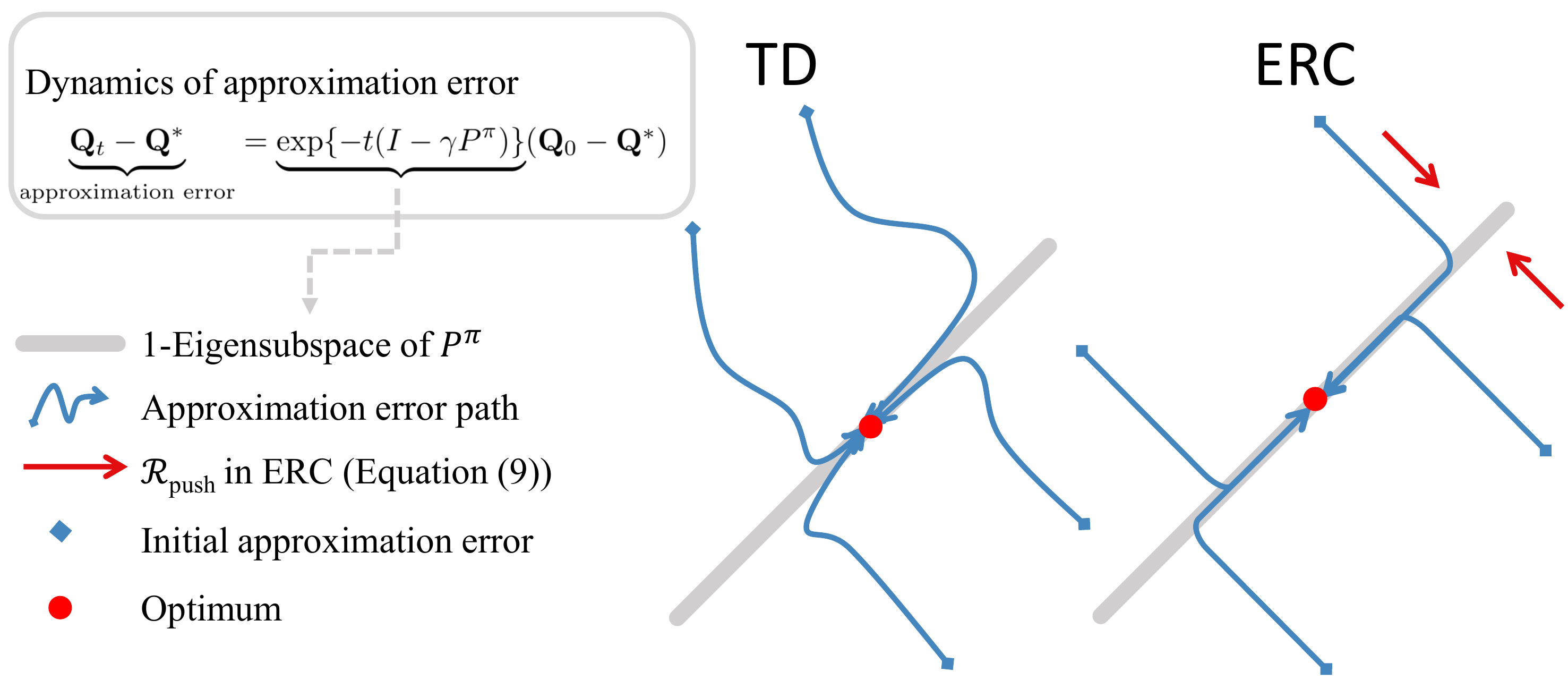}
\caption{\label{fig: dynamics}Value approximation error ($Q - Q^*$) path for TD and our proposed ERC algorithms. The approximation error of the TD method gradually approaches the 1-eigensubspace (as defined in \cref{remark: dynamics of td method}) before ultimately converging to the optimum. This path is referred to as the inherent path. ERC leverages this inherent path by directly pushing the approximation error towards the 1-eigensubspace through $\mathcal{R}_{\text{push}}$ (defined in \cref{eq: PE regularizer}), resulting in a more efficient and stable learning process.}
\end{figure}

Motivated by the potential benefits of utilizing structural information to design efficient RL algorithms \citep{lyle2021understanding,lyle2021effect, lylelearning,he2023frustratingly,YangAA22}, we investigate the dynamics of Q value approximation induced by the Temporal-Difference (TD) method. The Q value, commonly used to design DRL algorithms, is analyzed in the context of the matrix form of the Bellman equation. We study continuous learning dynamics. We also examine the crucial role of the transition kernel of MDP in the dynamics of the Bellman equation. We perform an eigenvalue decomposition of that dynamic process, which reveals that for any MDP, the TD method induces an inherent learning path of approximation error in the value approximation process.

Furthermore, as \cref{fig: dynamics} shows, the approximation error of the TD method is optimized towards the 1-eigensubspace before ultimately converging to zero. It is in direct contrast with the optimum that the Monte Carlo (MC) method achieves \citep{rl}. Thus, such an inherent path is non-trivial. Such a path, which corresponds to prior knowledge of MDP, has been neglected in designing DRL algorithms \citep{lyle2021understanding,lyle2021effect, lylelearning,YangAA22}, despite its great potential. Our main idea is utilizing the prior knowledge to improve the efficiency of value approximation by directly guiding the approximation error towards the 1-eigensubspace, leading to a more efficient and stable path. To that end, we propose a novel value approximation method, \underline{E}igensubspace \underline{R}egularized \underline{C}ritic (ERC), as shown in \cref{fig: dynamics}. We also establish the convergence of our proposal. Theoretical analysis and experiments demonstrate that ERC effectively reduces the variance of value functions.

To evaluate the effectiveness of our proposed algorithm, ERC, we conduct extensive experiments on the continuous control suite DMControl~\citep{dm-control}. Our empirical results demonstrate that ERC performs well in terms of approximation error. Moreover, by examining the ERC variance reduction, we verify its superior performance compared to the algorithms specifically designed for variance control (e.g., TQC~\citep{tqc}, REDQ~\citep{redq}), as consistent with our theoretical analysis. All in all, comparisons show that ERC outperforms the majority of the state-of-the-art methods (\textbf{20} out of \textbf{26} tasks) while achieving a similar performance otherwise. Specifically, on average, the ERC algorithm surpasses TQC, REDQ, SAC~\citep{sac}, and TD3~\citep{td3} by 13\%, 25.6\%, 16.6\%, and 27.9\%, respectively.

The main contributions of this work include i) identifying the existence of an inherent path of the approximation error for TD learning, ii) leveraging the inherent path to introduce a more efficient and stable method, ERC, with a convergence guarantee, and iii) demonstrating, through comparison with state-of-the-art deep RL methods, the superiority of ERC in terms of performance, variance, and approximation error.
\section{Preliminaries}
To formalize RL, one uses an MDP framework consisting of 6-tuple $(\mathcal{S, A, }R, P, \gamma, \rho_0)$, where $\mathcal{S}$ denotes a state space, $\mathcal{A}$ an action space, $R: \mathcal{S} \times \mathcal{A} \rightarrow \mathbb{R}$ a reward function, $P: \mathcal{S} \times \mathcal{A} \rightarrow p(s)$ a transition kernel, $\gamma \in [0, 1)$ a discount factor, and $\rho_0$ an initial state distribution.

Deep RL focuses on optimizing the policy through return, defined as $R_t=\sum_{i=t}^{T}\gamma^{i-t} r(s_i, a_i)$. The action value (Q) function, $Q^{\pi}(s, a)$, represents the quality of a specific action, $a$, in a state, $s$, for a given policy $\pi$. Formally, the Q function is defined as
\begin{equation}
 Q^\pi(s,a) = \mathbb{E}_{\tau \sim \pi, p} [R_{\tau} | s_0 = s, a_0 = a],
\end{equation}
where $\tau$ is a state-action sequence $(s_0, a_0, s_1, a_1, s_2, a_2 \cdots)$ induced by a policy $\pi$ and $P$. The state value (V) function is $V^\pi(s) = \mathbb{E}_{\tau \sim \pi, p} [R_{\tau} | s_0 = s]$. A four-tuple $(s_t, a_t, r_t, s_{t+1})$ is referred to as a transition. The $Q$ value can be recursively computed by Bellman equation \cite{rl}
\begin{equation}
Q^\pi(s,a) = r(s,a) + \gamma \mathbb{E}_{s',a'} [Q^\pi(s',a')],
\label{eq: bellman consistency equation}
\end{equation}
where $s' \sim p(\cdot|s,a)$ and $a \sim \pi(\cdot |s)$. 
The process of using a function approximator (e.g. neural networks) to estimate Q or V values is referred to as value approximation.

\textbf{Bellman equation in matrix form.} 
Let $\mathbf{Q}^\pi$ denote the vector of all Q value with length $|\mathcal{S}|\cdot |\mathcal{A}|$, and $\mathbf{r}$ as vectors of the same length. We overload notation and let P refer to a matrix of dimension $(|\mathcal{S}|\cdot |\mathcal{A}|) \times |\mathcal{S}|$, with entry $P_{s,a,s'}$ equal to $P(s'|s,a)$. We define $P^\pi$ to be the transition matrix on state-action pairs induced by a stationary policy $\pi$
\begin{equation}\label{eq: definition of transition matrix induced by a policy}
    P^{\pi}_{s,a,s',a'} := P(s'|s,a)\pi (a'|s').
\end{equation}
Then, it is straightforward to verify 

\begin{equation}\label{eq: bellman eq matrix form}
    \mathbf{Q}^\pi = \mathbf{r} + \gamma P^\pi \mathbf{Q}^\pi,
\end{equation}
where $P^\pi \in \mathbb{R}^{|\mathcal{S}|\cdot|\mathcal{A} |\times  |\mathcal{S}|\cdot|\mathcal{A}|}$. 
The following famous eigenpair result holds.
 \begin{remark}[Eigenpair for stochastic matrix $P^\pi$~\citep{matrix-analysis}]
 \label{remark: Eigenpair for stochastic matrix}
The spectral radius of $P^\pi$ is 1. The eigenvector corresponding to 1 is $\mathbf{e}$, where $\mathbf{e}$ is a column of all 1’s.  
\end{remark}

\section{Method}
In this section, we start with the dynamics of Q value approximation induced by the TD method. This analysis reveals that in the value approximation, the approximation error has an inherent path. We leverage that path to design our proposed practical algorithm, ERC. We also investigate the convergence property of our method. All proofs are available in the appendix.

\subsection{An Inherent Path of Value Approximation}
\label{sec: dynamics of bellman error}
Motivated by the potential benefits of using structural information to design novel RL algorithms \citep{lyle2021understanding,lyle2021effect, lylelearning,poer,YangAA22}, we examine the dynamics of Q value induced by the Bellman equation.
Given \cref{eq: bellman eq matrix form} in matrix form, the true Q function of policy $\pi$ can be directly solved.
\begin{remark}[True Q value of a policy \citep{agarwal2019reinforcement}]
\label{corollary: Q pi}
Given a transition matrix induced by a policy $\pi$, as defined in \cref{eq: definition of transition matrix induced by a policy}, the true Q function of policy $\pi$ can be directly solved by
\begin{equation}
    \mathbf{Q}^*=\left(I-\gamma P^\pi\right)^{-1} \mathbf{r}, 
\end{equation}
where $I$ is the identity matrix and $\mathbf{Q}^*$ is the true Q function of policy $\pi$.
\end{remark}

To simplify the notation, let $X = (s, a)$.
The one-step temporal difference (TD) continuous learning dynamics follows as
\begin{equation}
    \partial_t Q_t (x) = \mathbb{E}_\pi [r_t + \gamma Q_t(x_{t+1}) | x_t] -Q_t(x).
\end{equation}
According to \cref{eq: bellman eq matrix form}, we have the matrix form
\begin{equation} 
\label{eq: dynamics differential form}
    \partial_t \mathbf{Q}_t = - (I - \gamma P^\pi) \mathbf{Q}_t + \mathbf{r}.
\end{equation}
\Cref{eq: dynamics differential form} is a differential equation that is directly solvable using \cref{corollary: Q pi}.
\begin{lemma}[Dynamics of approximation error] \label{thm: dynamics of Q}
Consider a continuous sequence $\{Q_t | t \geq 0 \}$, satisfy \cref{eq: dynamics differential form} with initial condition $\mathbf{Q}_0$ at time step $t=0$, then
\begin{equation}
\label{eq: dynamics of approximation error}
\mathbf{Q}_t - \mathbf{Q}^* = \exp \{ -t (I - \gamma P^\pi)   \} (\mathbf{Q}_0 - \mathbf{Q}^*) .
\end{equation}
\end{lemma}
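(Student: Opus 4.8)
The plan is to recognize \cref{eq: dynamics differential form} as a linear, inhomogeneous, constant-coefficient ODE in the vector $\mathbf{Q}_t$, and to solve it by reducing to a homogeneous equation via a change of variables centered at the fixed point. First I would define the error process $\mathbf{E}_t := \mathbf{Q}_t - \mathbf{Q}^*$ and differentiate: since $\mathbf{Q}^*$ is constant in $t$, we have $\partial_t \mathbf{E}_t = \partial_t \mathbf{Q}_t = -(I - \gamma P^\pi)\mathbf{Q}_t + \mathbf{r}$. Then I would substitute $\mathbf{Q}_t = \mathbf{E}_t + \mathbf{Q}^*$ into the right-hand side to get $\partial_t \mathbf{E}_t = -(I - \gamma P^\pi)\mathbf{E}_t - (I - \gamma P^\pi)\mathbf{Q}^* + \mathbf{r}$.

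The key step is to observe that the constant terms cancel: by \cref{corollary: Q pi}, $\mathbf{Q}^* = (I - \gamma P^\pi)^{-1}\mathbf{r}$, so $(I - \gamma P^\pi)\mathbf{Q}^* = \mathbf{r}$, and hence $-(I-\gamma P^\pi)\mathbf{Q}^* + \mathbf{r} = \mathbf{0}$. (I should note that $(I - \gamma P^\pi)$ is indeed invertible: by \cref{remark: Eigenpair for stochastic matrix} the spectral radius of $P^\pi$ is $1$, so $\gamma P^\pi$ has spectral radius $\gamma < 1$ and $I - \gamma P^\pi$ has no zero eigenvalue.) This leaves the homogeneous linear system $\partial_t \mathbf{E}_t = -(I - \gamma P^\pi)\mathbf{E}_t$, whose unique solution with initial value $\mathbf{E}_0 = \mathbf{Q}_0 - \mathbf{Q}^*$ is $\mathbf{E}_t = \exp\{-t(I - \gamma P^\pi)\}\,\mathbf{E}_0$, by the standard theory of matrix exponentials (the function $t \mapsto e^{-tA}$ satisfies $\partial_t e^{-tA} = -A e^{-tA}$ and equals $I$ at $t=0$, and uniqueness follows from Picard--Lindelöf applied to this Lipschitz linear field). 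Substituting back the definition of $\mathbf{E}_t$ yields exactly \cref{eq: dynamics of approximation error}.

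There is no serious obstacle here; the argument is essentially a textbook computation. The only point requiring a little care is justifying that the matrix exponential solution is the unique one and that it genuinely solves the ODE — this I would handle by citing standard ODE theory rather than rederiving it — and confirming the cancellation of the inhomogeneous term, which is the one place where the structural fact from \cref{corollary: Q pi} (and implicitly the invertibility guaranteed by \cref{remark: Eigenpair for stochastic matrix}) is used. Optionally, I would remark that diagonalizing or Schur-decomposing $I - \gamma P^\pi$ makes the subsequent eigensubspace discussion transparent, but it is not needed for the proof of the lemma itself.
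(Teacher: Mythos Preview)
Your proof is correct and follows the same approach as the paper: the paper's own proof simply states that the ODE can be solved directly with the help of \cref{corollary: Q pi}, and your write-up is precisely the detailed execution of that one-line remark (change variables to the error, cancel the inhomogeneous term via $(I-\gamma P^\pi)\mathbf{Q}^*=\mathbf{r}$, then apply the matrix exponential).
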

From \cref{thm: dynamics of Q}, $\mathbf{Q}_t$ converges to $\mathbf{Q}^*$, as $t \to \infty$. The approximation error, $\mathbf{Q}_t - \mathbf{Q}^*$, appears in \cref{eq: dynamics of approximation error}, which reveals the dynamics of approximation error is related to the structure of transition kernel and policy $\pi$. Moreover, there is rich structural information in $P^\pi$, which inspires us to consider \cref{eq: dynamics of approximation error} in a more fine-grained way. To better understand the approximation error, following \citet{ghosh2020representations, lyle2021effect, lylelearning}, we make the following assumption.
\begin{assumption}
\label{assumption: P pi diagnoalize}
    $P^\pi$ is a real-diagonalizable matrix with a strictly decreasing eigenvalue sequence $ |\lambda_1|, |\lambda_2|, \cdots, |\lambda_{|\mathcal{S}|\cdot |\mathcal{A}|} |$, and the corresponding eigenvector $H_1, H_2 , \cdots, H_{|\mathcal{S}|\cdot |\mathcal{A}|}$.
\end{assumption}
\Cref{remark: Eigenpair for stochastic matrix} shows that $\lambda_1 = 1 $ because the $P^\pi$ is a stochastic matrix, and the eigenvector corresponding to 1 is $\mathbf{e}$. That inspires us to perform an eigenvalue decomposition for \cref{eq: dynamics of approximation error}. 
\begin{theorem}
\label{theorem: path of td method}
If \Cref{assumption: P pi diagnoalize} holds,
we have $\mathbf{Q}_t - \mathbf{Q}^* = \alpha_1 \exp\{ t(\gamma \lambda_1 -1 )\}H_1 + \sum_{i=2}^{|\mathcal{S}|\cdot |\mathcal{A}|} \alpha_i \exp \{ t(\gamma \lambda_i -1) \} H_i = \alpha_1 \exp\{ t(\gamma \lambda_1 -1) \}H_1 + o\Big( \alpha_1 \exp\{ t(\gamma \lambda_1 -1 )\} \Big)$, where $\alpha_i$ is a constant.
\end{theorem}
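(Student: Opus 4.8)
The plan is to diagonalize the matrix exponential appearing in \cref{eq: dynamics of approximation error} and to track each eigencomponent of the approximation error separately. First I would invoke \cref{assumption: P pi diagnoalize}: since $P^\pi$ is real-diagonalizable, its eigenvectors $\{H_i\}_{i=1}^{|\mathcal{S}|\cdot|\mathcal{A}|}$ form a basis of $\mathbb{R}^{|\mathcal{S}|\cdot|\mathcal{A}|}$, so the initial error admits a unique expansion $\mathbf{Q}_0 - \mathbf{Q}^* = \sum_{i} \alpha_i H_i$ with (real) coefficients $\alpha_i$; these are exactly the constants in the statement.

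Next I would use the spectral mapping for the matrix exponential. If $P^\pi H_i = \lambda_i H_i$, then $(I - \gamma P^\pi) H_i = (1 - \gamma\lambda_i) H_i$, hence, directly from the power-series definition, $\exp\{-t(I-\gamma P^\pi)\} H_i = e^{-t(1-\gamma\lambda_i)} H_i = e^{t(\gamma\lambda_i - 1)} H_i$. Applying this termwise to the eigen-expansion and substituting into \cref{thm: dynamics of Q} gives $\mathbf{Q}_t - \mathbf{Q}^* = \sum_i \alpha_i e^{t(\gamma\lambda_i-1)} H_i$; isolating the $i=1$ term yields the first claimed identity.

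For the asymptotic equality I would exploit the strict ordering of the eigenvalues together with $\lambda_1 = 1$ from \cref{remark: Eigenpair for stochastic matrix}. Since $|\lambda_i| < |\lambda_1| = 1$ and the $\lambda_i$ are real, we have $\lambda_i \in (-1,1)$, so $\gamma\lambda_i - 1 < \gamma - 1 = \gamma\lambda_1 - 1$ for every $i \geq 2$; equivalently $e^{t(\gamma\lambda_i-1)}/e^{t(\gamma\lambda_1-1)} = e^{t\gamma(\lambda_i-1)} \to 0$ as $t\to\infty$. As the sum over $i\geq 2$ has finitely many terms, $\sum_{i\geq2}\alpha_i e^{t(\gamma\lambda_i-1)}H_i = o\big(\alpha_1 e^{t(\gamma\lambda_1-1)}\big)$ in any fixed norm, which gives the second identity.

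The step needing the most care is the little-$o$ claim: it implicitly requires $\alpha_1 \neq 0$, i.e. the initial error has a nonzero component along $\mathbf{e}$, which should be noted as a genericity condition; and one must check that, although some eigenvalues may be negative, the exponents $\gamma\lambda_i - 1$ are still \emph{strictly} dominated by $\gamma\lambda_1 - 1$, so no sign oscillation interferes with the exponential decay. The remainder is routine bookkeeping with the eigenbasis and the spectral mapping theorem for $\exp\{\cdot\}$.
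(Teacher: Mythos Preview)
Your proposal is correct and follows essentially the same approach as the paper: expand the initial error in the eigenbasis of $P^\pi$, push the matrix exponential through via spectral mapping, and use the strict ordering $|\lambda_i|<|\lambda_1|=1$ to obtain the little-$o$ remainder. If anything, your argument is more careful than the paper's, since you explicitly flag the genericity assumption $\alpha_1\neq 0$ and the fact that real eigenvalues in $(-1,1)$ guarantee $\gamma\lambda_i-1<\gamma-1$, neither of which the paper spells out.
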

\cref{theorem: path of td method} states that the approximation error of the TD method can be decomposed into two components. One of the components is primarily influenced by the 1-eigensubspace. Thus, the approximation error of the value function in the TD method follows a path induced by the 1-eigensubspace, which is a result of the stochasticity inherent in MDP.
\begin{remark}[An inherent path of TD method]
\label{remark: dynamics of td method}
   Given an MDP consisting of 6-tuple $(\mathcal{S, A, }R, P, \gamma, \rho_0)$, policy $\pi$, and a Banach space $(\mathcal{Q}, \| \cdot \|)$, there exists an inherent path that the Bellman approximation error, i.e., $\mathbf{Q}_t - \mathbf{Q}^*$, starts at the initial point and then approaches the 1-eigensubspace before ultimately converging to zero. The 1-eigensubspace, which is induced by $P^\pi$, is defined as $\{c \: \mathbf{e}\}$, where $c \in \mathbb{R}$ and $\mathbf{e}$ is a column of all ones. 
\end{remark}

\begin{wrapfigure}{r}{0.5\textwidth}
  \centering
    \vspace{-20pt}
\includegraphics[width=0.5\textwidth]{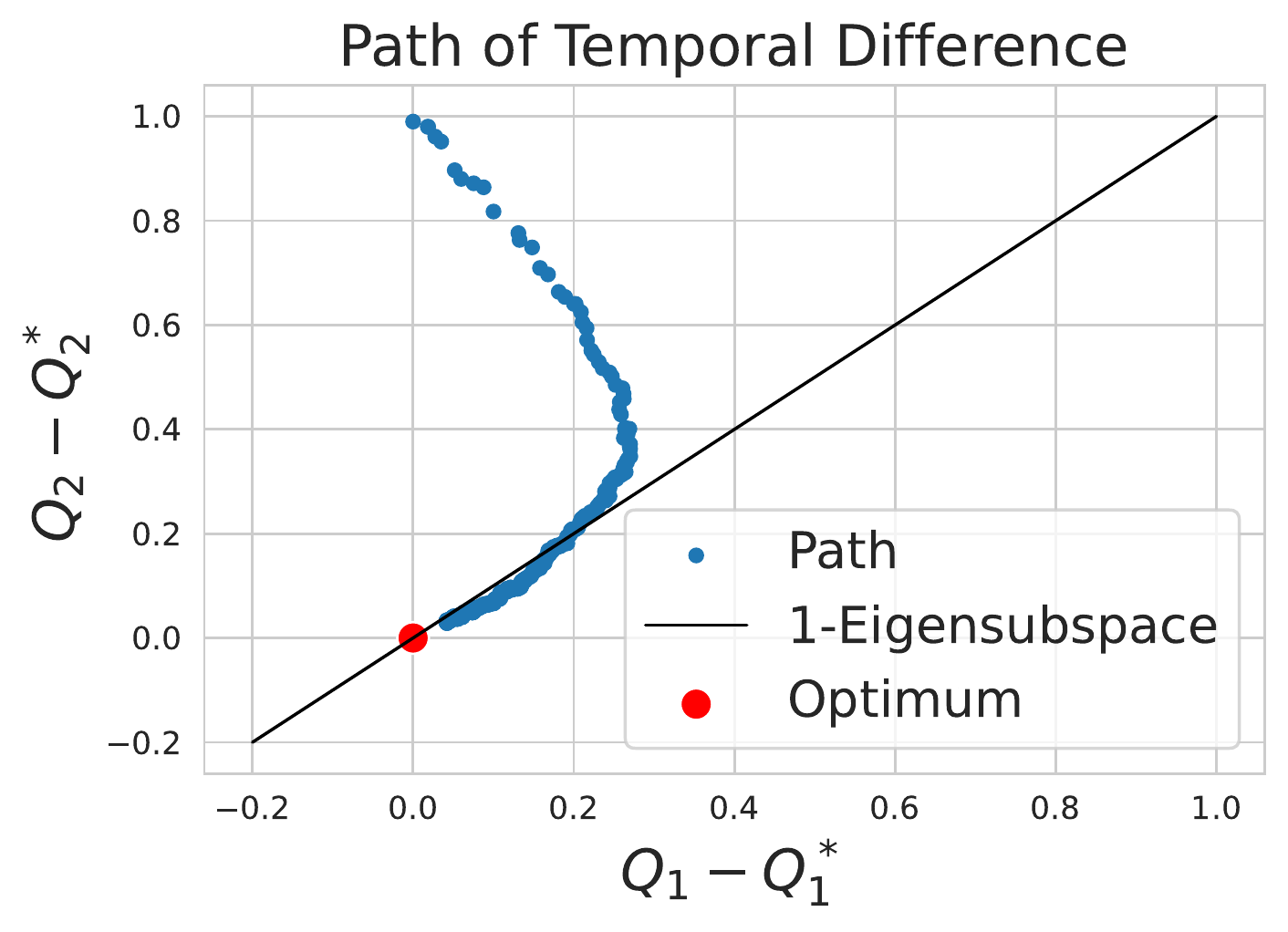}
  \caption{\label{fig:  dynamics td}Path of TD method.  There exists an inherent path that approximation error approaches 1-eigensubspace before converging to zero. The empirical fact is consistent with our theoretical analysis in \cref{theorem: path of td method}.}
  \vspace{-15pt}
\end{wrapfigure}


\textbf{What does a theoretically inherent path really look like in practice?} The aforementioned content discusses the inherent path of approximation error. Does this occur in the practical scene? To empirically show the path of the approximation error, we visualize the path of approximation error given a fixed policy. The results are given in \cref{fig:  dynamics td}, where we perform experiments on \href{https://github.com/openai/gym/blob/master/gym/envs/toy_text/frozen_lake.py}{FrozenLake-v1 environment} since the true Q value $Q^*$ of this environment can be evaluated by the Monte Carlo Method. The approximation error of the TD method is optimized toward the 1-eigensubspace before ultimately converging to zero rather than directly toward the optimum. The inherent path in \cref{fig:  dynamics td} is consistent with \cref{remark: dynamics of td method}. Thus, such a path is non-trivial and motivates us to improve the value approximation by guiding the approximation error to 1-eigensubspace, resulting in an efficient and robust path. 

\subsection{Using Eigensubspace Regularization to Improve Value Approximation }
\label{sec: Improve value approximation via Eigensubspace regularization}
%
\begin{figure}[!t]
\centering
\subcaptionbox{\label{fig: sub distance2eigensubspace}}
{\includegraphics[width=0.5\textwidth]{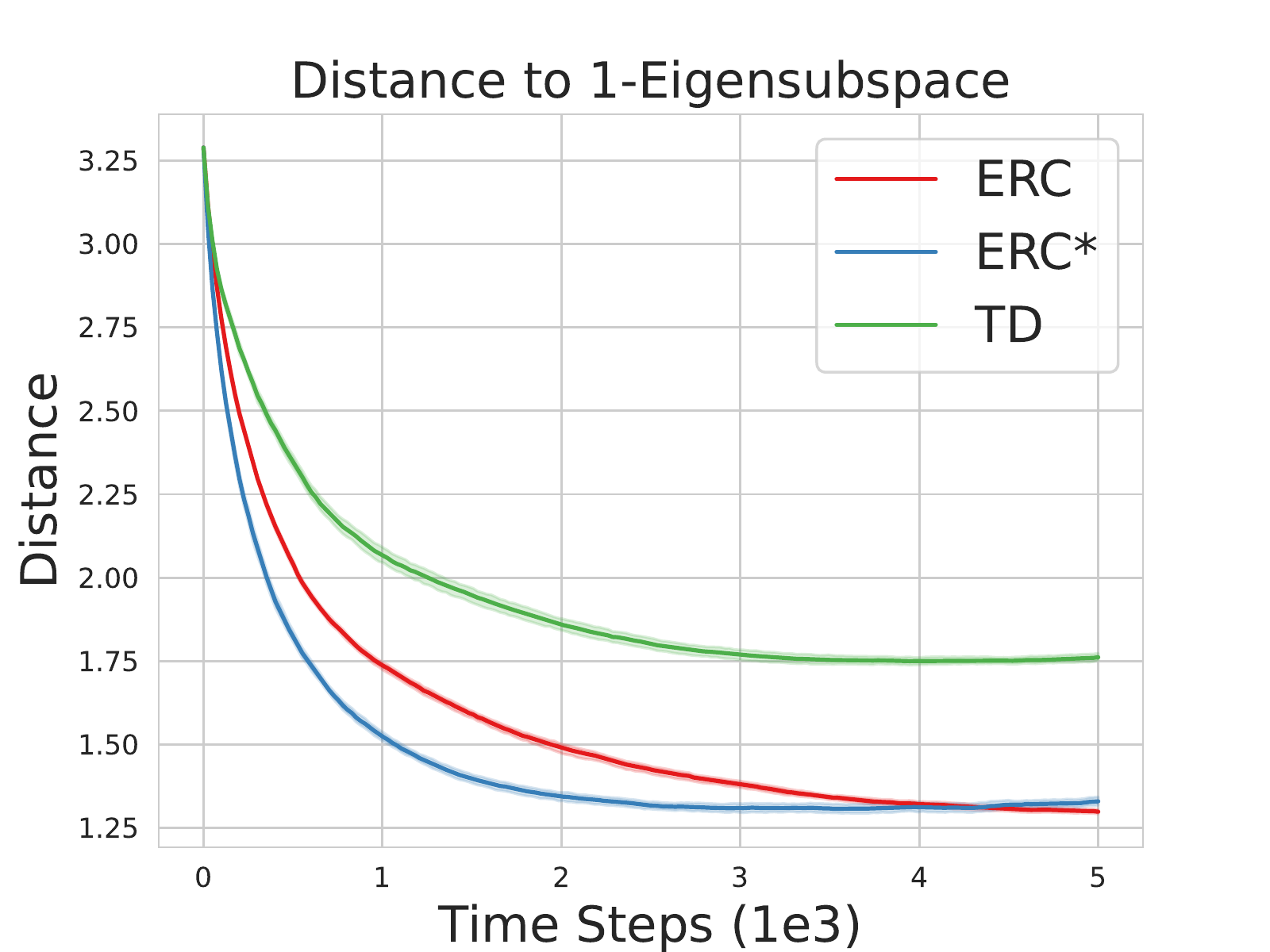}}
\hspace{-0.2in}
\subcaptionbox{\label{fig: sub Approximation_Error}}
{\includegraphics[width=0.5\textwidth]{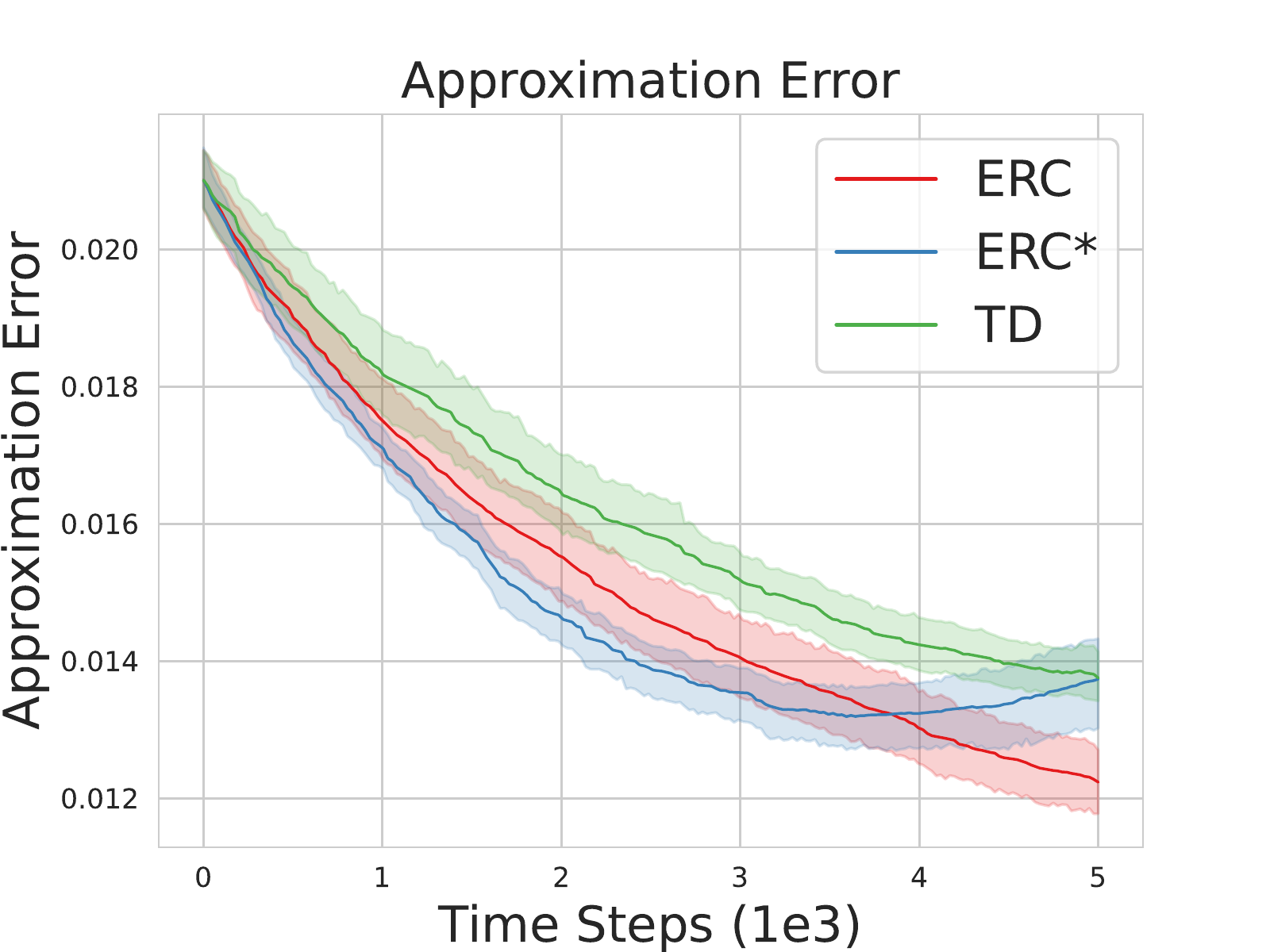}}
\caption{\label{fig: distance 2 eigensubspace and approximation error}Value function approximation process for various methods on \href{https://github.com/openai/gym/blob/master/gym/envs/toy_text/frozen_lake.py}{FrozenLake-v1} environment. (a) illustrates the distance between the approximation error and the 1-eigensubspace for various methods, where ERC* denotes ERC utilizing an oracle true Q value, $Q^*$, to push the approximation error towards the 1-eigensubspace. The results demonstrate that the ERC method is closer to the 1-eigensubspace at the same time compared to the TD method. (b) represents the absolute approximation error for various algorithms. The result illustrates that the ERC method has a smaller approximation error at the same time than the TD method. For both metrics, ERC is observed to outperform or be at least as good as ERC* in later stages. The shaded area represents a standard deviation over ten trials.}
\end{figure} 
The inherent path, which can be viewed as prior knowledge of MDP, has not been utilized to design DRL algorithms in previous work~\citep{lyle2021understanding,lyle2021effect, lylelearning,YangAA22}. We improve the value approximation by directly guiding the approximation error towards the 1-eigensubspace, leading to a more efficient and stable path.

The true Q value, $Q^*$, is always unknown when designing DRL algorithms. However, we can use a target Q as an approximate substitute for $Q^*$ for two reasons. Firstly, from the perspective of value function optimization, the objective of the Bellman equation optimization is to make the learned Q-value as close as possible to $Q^*$. That is achievable by minimizing the distance between the learned Q and the target Q. Instead of using $Q^*$ directly in learning, the target Q is used to approximate $Q^*$ through a bootstrap approach. Similarly, we use target Q in our ERC algorithm design. Secondly, in our experiments, we find that using target Q to replace $Q^*$ produces an effect that approximates the effect produced by using $Q^*$, as illustrated in \cref{fig: distance 2 eigensubspace and approximation error}. We calculate the distance of the approximation error to the 1-eigensubspace and approximation error during the optimization of the value function, and we can see that: i) the ERC algorithm using target Q instead of $Q^*$ allows the approximation error to reach the 1-eigensubspace faster than the TD algorithm and the effect can be approximated to that of the ERC using $Q^*$ (ERC* algorithm). And ii) for the approximation error, the ERC algorithm obtains a relatively small approximation error compared to the TD method. The approximation error of ERC is smaller than that of ERC* in the later optimization stages. Therefore, it is reasonable to replace $Q^*$ with the target Q in the practical design of the algorithm. Using target Q-value is a feasible solution that allows us to achieve results similar or better to using $Q^*$, and it has the advantage of being more easily implemented in practice. Thus, the Bellman error is pushed to 1-eigensubspace in the ERC algorithm.

To push the Bellman error toward the 1-eigensubspace, it is essential to project the error onto that subspace. Therefore, one must determine the projected point in the 1-eigensubspace to the Bellman error. 
\begin{lemma}\label{lemma: distance to 1-eigensubspace}
    Consider a Banach space $(\mathfrak{B}, \| \cdot \|)$ of dimension N, and let the N-dimensional Bellman error at timestep $t$, represented by $\mathbf{B}^t$, have coordinates $(B_1, B_2, \cdots, B_N)$ in $(\mathfrak{B}, \| \cdot \|)$. Within this Banach space, the projected point in the 1-eigensubspace, which is closest to $B^t$, is $Z^t$ whose coordinates are $(z^t, z^t,\cdots, z^t)$, where $z^t = \frac{1}{N}\sum_{j=1}^{N} B_i^t$.
\end{lemma}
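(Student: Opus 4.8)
The plan is to recognize that finding the point of the 1-eigensubspace closest to $\mathbf{B}^t$ is nothing more than a one-dimensional least-squares problem. First I would parametrize the subspace: by \cref{remark: Eigenpair for stochastic matrix} and \cref{remark: dynamics of td method}, every element of the 1-eigensubspace has the form $c\,\mathbf{e}$ for a scalar $c\in\mathbb{R}$, where $\mathbf{e}=(1,1,\dots,1)$. Hence the quantity to minimize is $f(c) := \|\mathbf{B}^t - c\,\mathbf{e}\|^2 = \sum_{j=1}^N (B_j^t - c)^2$ over $c\in\mathbb{R}$, working with the Euclidean ($\ell^2$) norm, which is precisely the geometry under which the coordinate-wise average is the minimizer.

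Next I would solve this scalar optimization. The map $c\mapsto f(c)$ is a strictly convex quadratic, with $f'(c) = -2\sum_{j=1}^N (B_j^t - c)$ and $f''(c) = 2N > 0$; setting $f'(c)=0$ gives the unique minimizer $c^\star = \tfrac1N\sum_{j=1}^N B_j^t =: z^t$. Equivalently, and perhaps more cleanly, I would invoke the orthogonal-projection characterization: $c\,\mathbf{e}$ is the nearest point of the subspace to $\mathbf{B}^t$ if and only if the residual $\mathbf{B}^t - c\,\mathbf{e}$ is orthogonal to $\mathbf{e}$, i.e. $\langle \mathbf{B}^t - c\,\mathbf{e},\, \mathbf{e}\rangle = \sum_{j=1}^N B_j^t - Nc = 0$, which again forces $c = z^t$. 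Substituting $c^\star$ back, the closest point $Z^t$ has coordinates $(z^t,\dots,z^t)$, which is the claimed expression.

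The only real subtlety — more a matter of phrasing than a genuine obstacle — is that the statement speaks of a general Banach space $(\mathfrak{B},\|\cdot\|)$, whereas the identification of the nearest point with the entrywise mean is special to the Hilbert (Euclidean) structure: for other norms the closest multiple of $\mathbf{e}$ would be a different "center" (e.g. a median for $\ell^1$). I would therefore state explicitly at the outset that we identify $\mathfrak{B}\cong\mathbb{R}^N$ and equip it with the standard inner product and induced $\ell^2$ norm, under which the projection computation above is exact; finite dimensionality also guarantees the infimum is attained, so the minimizer $Z^t$ genuinely exists and is unique.
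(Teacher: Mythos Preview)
Your proposal is correct and follows essentially the same route as the paper: parametrize points of the 1-eigensubspace as $c\,\mathbf{e}$, set up the scalar minimization $\min_c \sum_{j=1}^N (B_j^t - c)^2$, observe it is convex, and read off the minimizer $c=\tfrac{1}{N}\sum_j B_j^t$. Your additional remarks (the orthogonal-projection characterization and the caveat that the result is really an $\ell^2$/Hilbert-space statement rather than a general Banach-space one) go slightly beyond the paper's own proof but are accurate and welcome clarifications.
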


The Bellman error can be pushed towards 1-eigensubspace at each timestep with the help of \cref{lemma: distance to 1-eigensubspace}. To accomplish this, we minimize the regularization term
\begin{equation}
\label{eq: PE regularizer}
    \mathcal{R}_{\text{push}} (\theta) = \frac{1}{N}\sum_{i=1}^{N}\| B_i - Z \|_2^2,
\end{equation}
where $B_i$ represents the Bellman error at the $i$-dimension and $Z = \frac{1}{N}\sum_{j=1}^{N} B_i(\theta) $. 
By combining \cref{eq: PE regularizer} with policy evaluation loss $\mathcal{L}_{\textbf{PE}}$, the  \underline{E}igensubspace \underline{R}egularized \underline{C}ritic (ERC) algorithm is defined as
\begin{equation}
\label{eq: ERC loss network form}
	\mathcal{L}_{\text{ERC}}(\theta) = \mathcal{L}_{\text{PE}}(\theta) + \beta \mathcal{R}_{\text{push}}(\theta),
\end{equation}
where $\mathcal{L}_{\text{PE}}(\theta)$ is a policy evaluation phase loss such as  
\begin{equation*}
	\mathcal{L}_{\text{PE}} (\theta) = \big[Q(s,a; \theta) - \big(r(s,a) + \gamma \mathbb{E}_{s',a'} \big[Q(s',a'; \theta')\big] \big) \big]^2,
\end{equation*}
and $\beta$ is a hyper-parameter that controls the degree to which the Bellman error is pushed toward the 1-eigensubspace. ERC enhances the value approximation by pushing the Bellman error to 1-eigensubspace, leading to a more stable and efficient value approximation path. To evaluate the effectiveness of ERC, a case study is conducted on the FrozenLake environment. The distance between the approximation error and the 1-eigensubspace, as well as the absolute approximation error, are used as metrics to compare the performance of ERC with that of the TD method and that of ERC* (ERC utilizing oracle $Q^*$). The results in  \cref{fig: distance 2 eigensubspace and approximation error} demonstrate that ERC is superior to the TD method, and is either superior to or at least as effective as ERC*.

The theoretical benefit of the ERC method can be understood by examining \cref{eq: PE regularizer}. Minimizing $\mathcal{R}_{\text{push}}$ explicitly reduces the variance of the Bellman error. This reduction in variance leads to two benefits: minimizing the variance of the Q-value and minimizing the variance of the target Q-value. That is observable by rewriting \cref{eq: PE regularizer} as
\begin{equation}
\label{eq: R push expanding}
\begin{aligned}
        \mathcal{R}_{\text{push}} (\theta) = & \mathbb{E} \Big( (Q-\mathcal{B}Q)- \mathbb{E} [Q - \mathcal{B}Q]  \Big)^2 \\
= & \underbrace{\mathbb{E}[ (Q- \mathbb{E} [Q])^2]}_{\text{variance of }Q } + \underbrace{\mathbb{E}[( \mathcal{B} Q - \mathbb{E} \mathcal{B}Q)^2]}_{\text{variance of }\mathcal{B}Q } - \\
& 2 \underbrace{\mathbb{E}(Q- \mathbb{E} [Q])( \mathcal{B} Q - \mathbb{E} \mathcal{B}Q )}_{\text{covariance between $Q$ and $\mathcal{B}Q$} },
\end{aligned}
\end{equation}

where $\mathcal{B}$ is a bellman backup operator and $\mathcal{B}Q$ is a target Q. These facts highlight the benefits of the ERC algorithm, as it leads to a more stable and efficient Q value approximation.

\begin{algorithm}[!ht]
\SetAlgoLined
\textbf{Initialize} actor network $\pi$, and critic network $Q$ with random parameters; \\
\textbf{Initialize} target networks and replay buffer $\mathcal{D}$; \\
\textbf{Initialize} $\beta$, total steps $T$, and $t=0$; \\
Reset the environment and receive the initial state $s$; \\
\While{$t < T$}{
Select action w.r.t. its policy $\pi$ and receive reward $r$, new state $s'$; \\
Store transition tuple $(s, a, r, s')$ to $\mathcal{D}$; \\
Sample $N$ transitions $(s, a, r, s')$ from $\mathcal{D}$; \\
Compute $\hat{\mathcal{R}}_{\text{push}}$ by \cref{eq: ERC beta,eq: PE regularizer}; \\
Update critic by minimizing \cref{eq: sac-ERC Q objective}; \\
Update actor by minimizing \cref{eq: sac-ERC policy objective}; \\
Update $\alpha$ by minimizing \cref{eq: sac-ERC alpha objective}; \\
Update target networks; \\
$t\leftarrow t+1$, $s\leftarrow s'$;
}
\caption{ERC (based on SAC~\cite{sac})}
\label{alg:ERC}
\end{algorithm}

\subsection{Theoretical Analysis}
\label{sec: theoretical analysis}
We are also interested in examining the convergence property of ERC. To do this, we first obtain the new Q value after one step of updating in tabular form.

\begin{lemma}\label{lemma: ERC in tabular form}
Given the ERC update rules in \cref{eq: ERC loss network form}, ERC updates the value function in tabular form in the following way
\begin{equation}\label{eq: ERC update rule tabular setting}
\begin{aligned}
     Q_{t+1} = (1- \alpha_t (1+\beta) )  Q_t + \alpha_t (1+\beta) \mathcal{B}Q_t -  \alpha_t\beta C_t,
\end{aligned}
\end{equation}
where $\mathcal{B}Q_t(s_t,a_t) = r_t + \gamma \mathbb{E}_{s_{t+1},a_{t+1}} Q_t(s_{t+1}, a_{t+1})$, $C_t = 2 \mathbb{NG}(\mathbb{E} [ \mathcal{B}Q_t - Q_t ])$, and $\mathbb{NG}$ means stopping gradient.
\end{lemma}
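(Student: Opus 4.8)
The plan is to reduce the network objective \cref{eq: ERC loss network form} to a tabular stochastic-approximation recursion by differentiating $\mathcal{L}_{\text{ERC}}$ with respect to a single table entry $Q(s_t,a_t)$, treating the bootstrap target $\mathcal{B}Q_t$ (and, as discussed below, the batch mean Bellman error) as a stop-gradient constant — exactly the convention under which minimizing $\mathcal{L}_{\text{PE}}$ alone yields the familiar TD update $Q_{t+1}=(1-\alpha_t)Q_t+\alpha_t\mathcal{B}Q_t$ — and then taking one gradient-descent step of size $\alpha_t$.

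First I would dispatch the policy-evaluation term: tabular $\mathcal{L}_{\text{PE}}$ at $(s_t,a_t)$ is $\big(Q_t(s_t,a_t)-\mathcal{B}Q_t(s_t,a_t)\big)^2$, so its gradient is (up to the constant folded into $\alpha_t$) the Bellman error $B_t:=Q_t(s_t,a_t)-\mathcal{B}Q_t(s_t,a_t)$, which contributes the $(1-\alpha_t)Q_t+\alpha_t\mathcal{B}Q_t$ part of \cref{eq: ERC update rule tabular setting}. For the regularizer I would start from the variance form already recorded in \cref{eq: R push expanding}, $\mathcal{R}_{\text{push}}=\frac{1}{N}\sum_i B_i^2-\bar B^2$ with $\bar B=\mathbb{E}[Q_t-\mathcal{B}Q_t]$ the batch-mean Bellman error (equivalently the common coordinate of the projection $Z$ from \cref{lemma: distance to 1-eigensubspace}). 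Differentiating with respect to $Q(s_t,a_t)$ — which enters both $B_t$ and $\bar B$ — the chain-rule contribution through $\bar B$ telescopes because $\sum_i (B_i-\bar B)=0$, leaving a gradient proportional to $B_t-\bar B$ regardless of whether $\bar B$ is frozen.

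Assembling the two pieces, $\nabla_{Q(s_t,a_t)}\mathcal{L}_{\text{ERC}}\propto (1+\beta)B_t-\beta\bar B$; one step of size $\alpha_t$ then gives $Q_{t+1}=\big(1-\alpha_t(1+\beta)\big)Q_t+\alpha_t(1+\beta)\mathcal{B}Q_t+\alpha_t\beta\,\mathbb{E}[Q_t-\mathcal{B}Q_t]$, and rewriting the last term via $\mathbb{E}[Q_t-\mathcal{B}Q_t]=-\mathbb{E}[\mathcal{B}Q_t-Q_t]$ and reading off the definition $C_t=2\,\mathbb{NG}(\mathbb{E}[\mathcal{B}Q_t-Q_t])$ (with the numerical constant and the stop-gradient absorbed into that definition) produces \cref{eq: ERC update rule tabular setting}. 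The main obstacle I expect is the regularizer gradient: $\mathcal{R}_{\text{push}}$ couples every sampled entry through the shared statistic $\bar B$, so the chain rule through $\bar B$ must be written out before the telescoping cancellation becomes visible, and — more importantly for what follows — one must argue that freezing $\bar B$ as the constant $C_t$ is legitimate, since this is precisely what turns the ERC update into a genuine stochastic-approximation scheme that the subsequent convergence proof can analyze. Keeping track of the conventional constants (the $1/2$ from the square and the $1/N$ batch normalization, all customarily absorbed into $\alpha_t$ and into $C_t$) is the only remaining bookkeeping.
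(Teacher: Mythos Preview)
Your proposal is correct and follows essentially the same route as the paper: expand the ERC loss, differentiate with respect to the tabular entry while treating $\mathcal{B}Q_t$ as a stop-gradient, and take one gradient-descent step; the paper does this by expanding $(B-\bar B)^2$ directly with $\bar B$ declared stop-gradient from the outset, whereas you reach the same gradient $B_t-\bar B$ via the variance form and the telescoping identity $\sum_i(B_i-\bar B)=0$. Your observation that the telescoping makes the result insensitive to whether $\bar B$ is frozen is a small bonus over the paper's derivation, and your remark about the leftover factor of $2$ being absorbed into $C_t$ and $\alpha_t$ matches exactly how the paper handles (or rather elides) that constant.
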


To establish the convergence guarantee of ERC, we use an auxiliary lemma from stochastic approximation~\citep{convergence} and the results in \citet{sac}. \cref{theorem: convergence guarantee} states the convergence guarantee formally.

\begin{theorem}[Convergence of 1-Eigensubspace Regularized Value Approximation]\label{theorem: convergence guarantee}
    Consider the Bellman backup operator $\mathcal{B}$ and a mapping $Q: \mathcal{S} \times \mathcal{A} \rightarrow \mathbb{R}$, and $Q^{k}$ is updated with \cref{eq: ERC update rule tabular setting}. Then the sequence $\{Q^k\}_{k=0}^{\infty}$ will converge to 1-eigensubspace regularized optimal Q value of $\pi$ as $k \rightarrow \infty$. 
\end{theorem}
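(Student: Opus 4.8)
\textbf{Proof proposal for Theorem~\ref{theorem: convergence guarantee}.}

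The plan is to recast the update rule \cref{eq: ERC update rule tabular setting} as a stochastic approximation iteration of the form $Q_{t+1}(s_t,a_t) = (1-\alpha_t) Q_t(s_t,a_t) + \alpha_t \big( \mathcal{F} Q_t \big)(s_t,a_t) + \alpha_t w_t$, and then invoke the classical stochastic approximation lemma (the one cited as \citep{convergence}, in the same way \citet{sac} uses it for soft policy iteration) to conclude almost-sure convergence to the fixed point of $\mathcal{F}$. First I would collect the deterministic part: rearranging \cref{eq: ERC update rule tabular setting}, the effective operator is $\mathcal{F}Q = (1+\beta)\mathcal{B}Q - \beta\, Q - \beta C$, where $C = 2\,\mathbb{NG}(\mathbb{E}[\mathcal{B}Q - Q])$ is treated as a stop-gradient constant within the step; I would fold the stochasticity of sampling $s_{t+1}$ (and the empirical mean defining $C_t$) into the noise term $w_t$, and check that $w_t$ has zero conditional mean and conditionally bounded variance, so the standard noise hypotheses hold, along with the Robbins–Monro step-size conditions $\sum_t \alpha_t = \infty$, $\sum_t \alpha_t^2 < \infty$.

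The crux is showing $\mathcal{F}$ is a contraction in the sup-norm with modulus strictly less than $1$, so that it has a unique fixed point — which is exactly what we should call the ``1-eigensubspace regularized optimal Q value of $\pi$.'' For two functions $Q, Q'$, I would bound $\|\mathcal{F}Q - \mathcal{F}Q'\|_\infty$. The $\mathcal{B}$ part contributes $(1+\beta)\gamma\|Q-Q'\|_\infty$; the $-\beta Q$ part contributes $\beta\|Q-Q'\|_\infty$; and the $-\beta C$ part involves the average of $\mathcal{B}Q - Q$, whose variation is controlled by $(1+\gamma)\|Q-Q'\|_\infty$ up to the averaging (which is nonexpansive). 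Naively summing these gives a modulus that can exceed $1$, so the main obstacle is that $\mathcal{F}$ need not be a contraction for arbitrary $\beta$: one must either (i) restrict $\beta$ to a range depending on $\gamma$ for which the combined modulus $\kappa(\beta,\gamma) < 1$, or (ii) exploit cancellation between the $-\beta Q$ and $+(1+\beta)\mathcal{B}Q$ terms — noting that $(1+\beta)\mathcal{B}Q - \beta Q$ is an affine combination that, after accounting for the $C$ term's mean-subtraction structure, contracts on the quotient space modulo the 1-eigensubspace. I would pursue route (i) as the clean statement: derive the explicit bound on $\beta$ (of the form $\beta < \frac{1-\gamma}{\text{const}}$) ensuring $\kappa < 1$, and remark that this matches the regime used in the experiments.

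Once the contraction modulus $\kappa < 1$ is established, the remaining steps are routine: the fixed point $Q^\infty$ of $\mathcal{F}$ exists and is unique by Banach's theorem on the complete metric space $(\mathbb{R}^{|\mathcal S||\mathcal A|}, \|\cdot\|_\infty)$; the stochastic approximation lemma then yields $Q^k \to Q^\infty$ almost surely under the step-size and noise conditions verified above. Finally I would identify $Q^\infty$ concretely by setting $\mathcal{F}Q^\infty = Q^\infty$, i.e. $(1+\beta)(\mathcal{B}Q^\infty - Q^\infty) = \beta C = 2\beta\,\mathbb{E}[\mathcal{B}Q^\infty - Q^\infty]$, which forces the Bellman error $\mathcal{B}Q^\infty - Q^\infty$ to be a constant vector — precisely an element of the 1-eigensubspace $\{c\,\mathbf{e}\}$ — thereby justifying the name and closing the argument. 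I expect the contraction bound to be the only delicate point; everything downstream is a direct application of the cited machinery.
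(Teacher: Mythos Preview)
Your approach differs from the paper's in one key structural move, and that difference is exactly what generates the ``main obstacle'' you identify. The paper does not try to show that an operator of the form $\mathcal{F}Q = (1+\beta)\mathcal{B}Q - \beta Q - \beta C$ is a sup-norm contraction. Instead, it absorbs the factor $(1+\beta)$ into a rescaled step size $\tilde\alpha_t \coloneqq \alpha_t(1+\beta)$, so that \cref{eq: ERC update rule tabular setting} becomes
\[
Q_{t+1} \;=\; (1-\tilde\alpha_t)\,Q_t \;+\; \tilde\alpha_t\Big[\, r(s,a) \;-\; \tfrac{\beta}{1+\beta}\,C_t \;+\; \gamma\,\mathbb{E}_{s',a'}Q_t(s',a')\,\Big].
\]
It then declares $r_{\mathrm{erc}}(s,a) \coloneqq r(s,a) - \tfrac{\beta}{1+\beta}C_t$ a ``1-eigensubspace regularized reward'' and invokes the standard policy-evaluation convergence result of \citep{convergence} directly, since the bracketed target is just the ordinary Bellman backup $\mathcal{B}$ (modulus $\gamma$) with a shifted reward. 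No condition on $\beta$ is imposed.

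What each route buys: the paper's rescaling trick dissolves your contraction worry entirely --- the $-\beta Q$ term is never part of the operator, so there is nothing to sum to $(1+\beta)\gamma + \beta + \ldots$ --- at the cost of sweeping under the rug that $r_{\mathrm{erc}}$ depends on $Q_t$ through the stop-gradient scalar $C_t$, so the ``standard'' result is being applied to a time-varying reward. Your route is more honest about that dependence but then pays for it with an unnecessary restriction on $\beta$. Your closing fixed-point characterisation (that $\mathcal{B}Q^\infty - Q^\infty$ must lie in $\{c\,\mathbf{e}\}$) is more explicit than anything the paper writes and is worth keeping regardless of which argument you adopt.
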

%

\subsection{Practical Algorithm}

The proposed ERC algorithm, which utilizes the structural information in MDP to improve value approximation, can be formulated as a practical algorithm. We combine the ERC with Soft Actor Critic (SAC) algorithm \cite{sac}. 
For the value approximation, ERC optimizes
\begin{equation}
\begin{aligned}
	J^Q_{\text{ERC}}(\theta) = & \mathbb{E}_{(s,a) \sim \mathcal{D}} \Big[ \frac{1}{2} \Big(  Q(s,a;\theta) - \big(r(s,a) + \gamma \mathbb{E}_{s'\sim P}[ V(s'; \theta')] \big) \Big)^2\Big] + \beta \mathcal{R}_{\text{push}}, 
\end{aligned}
\label{eq: sac-ERC Q objective}
\end{equation}
where $V(s;\theta') = \mathbb{E}_{a\sim \pi(\cdot | s;\phi)} [  Q(s,a;\theta') - \alpha \log \pi(a|s;\phi)]$. For policy improvement, ERC optimizes
\begin{equation}
J^\pi_{\text{ERC}}(\phi) = \mathbb{E}_{s \sim \mathcal{D}} [\mathbb{E}_{a\sim \pi(\cdot \mid s, \phi)}[\alpha \log (\pi(a \mid s;\phi)) - Q(s,a; \theta)]].
\label{eq: sac-ERC policy objective}
\end{equation}

Besides, we also use the automated entropy trick. The temperature $\alpha$ is learned by minimizing 
\begin{equation}
J^{\alpha}_{\text{ERC}}(\alpha)  =  \mathbb{E}_{a\sim \pi^*} [- \alpha \log \pi^* (a|s; \alpha, \phi) -\alpha \mathcal{H}],
\label{eq: sac-ERC alpha objective}
\end{equation}
where $\mathcal{H}$ is a pre-selected target entropy. To help better stabilize the value approximation of ERC, we design a truncation mechanism for $\mathcal{R}_{\text{push}}$
\begin{equation}
\label{eq: ERC beta}
    \begin{aligned}
 \hat{\mathcal{R}}_{\text{push}} = \max \Big\{  \min \Big\{\beta \mathcal{R}_{\text{push}}, \mathcal{R}_{\text{max}} \Big\},  \mathcal{R}_{\text{min}}\Big\}.
\end{aligned}
\end{equation}
%
The practical algorithm is summarized in \cref{alg:ERC}.


\section{Experiments}
\label{sec: exp}
\begin{figure*}[!ht]
	\centering
\scalebox{1}{
\includegraphics[width=1.0 \textwidth]{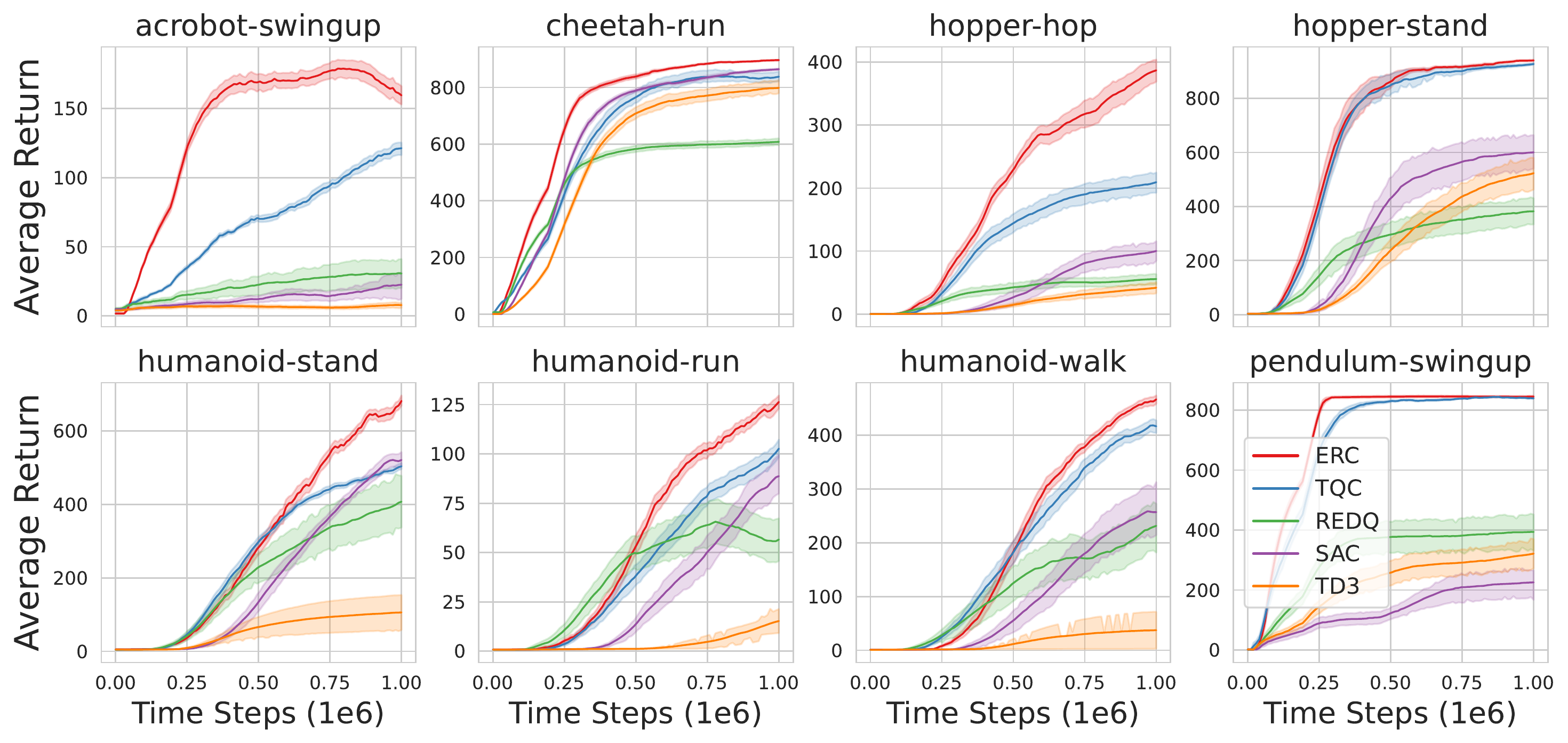}}
\vspace{-0.2in}
\caption{\label{fig: ERC performance}Performance curves for OpenAI gym continuous control tasks on DeepMind Control suite. The proposed algorithm, ERC, is observed to significantly outperform the other tested algorithms. The shaded region represents half of the standard deviation of the average evaluation over 10 seeds. The curves are smoothed with a moving average window of size ten.}
\end{figure*}
 In this section, we thoroughly evaluate the performance of ERC by comparing it to a few baseline methods. Furthermore, we examine the value approximation error and the variance of the value function to gain a deeper understanding of ERC. Additionally, we analyze the individual contributions of each component of ERC to gain insight into its effectiveness.

\subsection{Evaluation Setting}
\begin{table*}[!ht]
    \centering
    \caption{\label{table: ERC performance statistics}Average Return after 1M timesteps of training on DMC. ERC demonstrates state-of-the-art performance on the majority (\textbf{20} out of \textbf{26}) tasks. If not, ERC is still observed to be comparable in performance. Additionally, ERC outperforms its backbone algorithm, SAC, on \textbf{all} tasks by a large margin. Specifically, the ERC algorithm outperforms TQC, REDQ, SAC, and TD3 by 13\%, 25.6\%, 16.6\%, and 27.9\%, respectively. The best score is marked with \colorbox{mine}{colorbox.} $\pm$ corresponds to a standard deviation over ten trials. }
\resizebox{\textwidth}{!}{
\begin{tabular}{ll|lllll}
\toprule
\textbf{Domain} & \textbf{Task} &\textbf{ERC} &\textbf{TQC} &\textbf{REDQ} &\textbf{SAC} &\textbf{TD3}\\
\midrule
Acrobot & Swingup  &  \colorbox{mine} {151.0}{\footnotesize  $\pm$ 36.8} &\;136.3 {\footnotesize $\pm$ 51.9} &\;31.1 {\footnotesize $\pm$ 41.1} &\;26.9 {\footnotesize $\pm$ 47.7} &\;5.3 {\footnotesize $\pm$ 4.7}\\
BallInCup & Catch  & \;979.7 {\footnotesize $\pm$ 1.3} & \colorbox{mine} {981.6}{\footnotesize  $\pm$ 2.3} &\;978.8 {\footnotesize $\pm$ 3.7} &\;980.3 {\footnotesize $\pm$ 3.4} &\;978.9 {\footnotesize $\pm$ 3.6}\\
Cartpole & Balance  &  \colorbox{mine} {998.8}{\footnotesize  $\pm$ 1.2} &\;989.2 {\footnotesize $\pm$ 25.8} &\;984.0 {\footnotesize $\pm$ 6.0} &\;997.7 {\footnotesize $\pm$ 1.5} &\;997.9 {\footnotesize $\pm$ 2.1}\\
Cartpole & BalanceSparse  & \;998.6 {\footnotesize $\pm$ 4.5} &\;899.9 {\footnotesize $\pm$ 268.6} &\;872.1 {\footnotesize $\pm$ 262.7} &\;997.6 {\footnotesize $\pm$ 5.7} & \colorbox{mine} {1000.0}{\footnotesize  $\pm$ 0.0}\\
Cartpole & Swingup  & \;867.8 {\footnotesize $\pm$ 4.7} & \colorbox{mine} {874.3}{\footnotesize  $\pm$ 5.8} &\;828.1 {\footnotesize $\pm$ 17.2} &\;865.1 {\footnotesize $\pm$ 1.6} &\;867.2 {\footnotesize $\pm$ 7.5}\\
Cartpole & SwingupSparse  & \;544.5 {\footnotesize $\pm$ 356.9} & \colorbox{mine} {797.6}{\footnotesize  $\pm$ 32.1} &\;385.5 {\footnotesize $\pm$ 374.7} &\;234.4 {\footnotesize $\pm$ 358.4} &\;157.5 {\footnotesize $\pm$ 314.9}\\
Cheetah & Run  &  \colorbox{mine} {903.3}{\footnotesize  $\pm$ 5.9} &\;853.8 {\footnotesize $\pm$ 80.0} &\;614.2 {\footnotesize $\pm$ 58.2} &\;873.4 {\footnotesize $\pm$ 21.5} &\;811.3 {\footnotesize $\pm$ 102.2}\\
Finger & Spin  &  \colorbox{mine} {988.1}{\footnotesize  $\pm$ 0.6} &\;982.0 {\footnotesize $\pm$ 9.1} &\;940.1 {\footnotesize $\pm$ 33.5} &\;966.3 {\footnotesize $\pm$ 27.1} &\;947.6 {\footnotesize $\pm$ 52.1}\\
Finger & TurnEasy  &  \colorbox{mine} {981.1}{\footnotesize  $\pm$ 5.4} &\;247.4 {\footnotesize $\pm$ 133.6} &\;962.6 {\footnotesize $\pm$ 34.3} &\;920.0 {\footnotesize $\pm$ 91.8} &\;856.5 {\footnotesize $\pm$ 109.3}\\
Finger & TurnHard  &  \colorbox{mine} {964.8}{\footnotesize  $\pm$ 27.5} &\;299.2 {\footnotesize $\pm$ 266.6} &\;927.3 {\footnotesize $\pm$ 99.8} &\;874.1 {\footnotesize $\pm$ 100.1} &\;690.2 {\footnotesize $\pm$ 167.6}\\
Fish & Upright  &  \colorbox{mine} {936.0}{\footnotesize  $\pm$ 12.1} &\;917.1 {\footnotesize $\pm$ 25.6} &\;799.6 {\footnotesize $\pm$ 113.8} &\;898.6 {\footnotesize $\pm$ 50.4} &\;873.6 {\footnotesize $\pm$ 66.7}\\
Fish & Swim  & \;496.8 {\footnotesize $\pm$ 61.6} & \colorbox{mine} {526.6}{\footnotesize  $\pm$ 113.5} &\;159.3 {\footnotesize $\pm$ 100.1} &\;342.4 {\footnotesize $\pm$ 134.5} &\;251.3 {\footnotesize $\pm$ 107.7}\\
Hopper & Stand  &  \colorbox{mine} {943.9}{\footnotesize  $\pm$ 8.9} &\;941.6 {\footnotesize $\pm$ 11.4} &\;393.5 {\footnotesize $\pm$ 225.8} &\;597.8 {\footnotesize $\pm$ 308.8} &\;538.7 {\footnotesize $\pm$ 256.2}\\
Hopper & Hop  &  \colorbox{mine} {405.0}{\footnotesize  $\pm$ 91.1} &\;221.8 {\footnotesize $\pm$ 68.8} &\;56.8 {\footnotesize $\pm$ 36.2} &\;117.4 {\footnotesize $\pm$ 82.2} &\;47.8 {\footnotesize $\pm$ 46.2}\\
Humanoid & Stand  &  \colorbox{mine} {804.5}{\footnotesize  $\pm$ 39.1} &\;494.9 {\footnotesize $\pm$ 145.5} &\;407.2 {\footnotesize $\pm$ 336.8} &\;549.6 {\footnotesize $\pm$ 201.0} &\;110.6 {\footnotesize $\pm$ 206.8}\\
Humanoid & Walk  &  \colorbox{mine} {507.0}{\footnotesize  $\pm$ 37.0} &\;376.4 {\footnotesize $\pm$ 182.5} &\;245.0 {\footnotesize $\pm$ 222.6} &\;248.4 {\footnotesize $\pm$ 220.8} &\;39.3 {\footnotesize $\pm$ 101.9}\\
Humanoid & Run  &  \colorbox{mine} {145.9}{\footnotesize  $\pm$ 10.1} &\;115.6 {\footnotesize $\pm$ 18.6} &\;70.8 {\footnotesize $\pm$ 57.0} &\;83.4 {\footnotesize $\pm$ 56.0} &\;18.1 {\footnotesize $\pm$ 33.9}\\
Pendulum & Swingup  &  \colorbox{mine} {846.6}{\footnotesize  $\pm$ 14.1} &\;834.0 {\footnotesize $\pm$ 30.1} &\;382.6 {\footnotesize $\pm$ 297.0} &\;226.2 {\footnotesize $\pm$ 228.9} &\;338.0 {\footnotesize $\pm$ 232.0}\\
PointMass & Easy  & \;882.3 {\footnotesize $\pm$ 18.2} &\;793.7 {\footnotesize $\pm$ 147.6} &\;880.9 {\footnotesize $\pm$ 16.7} & \colorbox{mine} {889.9}{\footnotesize  $\pm$ 33.1} &\;838.8 {\footnotesize $\pm$ 158.5}\\
Reacher & Easy  &  \colorbox{mine} {986.9}{\footnotesize  $\pm$ 2.3} &\;964.5 {\footnotesize $\pm$ 39.5} &\;970.9 {\footnotesize $\pm$ 24.4} &\;983.5 {\footnotesize $\pm$ 4.2} &\;983.4 {\footnotesize $\pm$ 3.7}\\
Reacher & Hard  &  \colorbox{mine} {981.8}{\footnotesize  $\pm$ 1.7} &\;971.8 {\footnotesize $\pm$ 5.2} &\;964.1 {\footnotesize $\pm$ 24.0} &\;958.6 {\footnotesize $\pm$ 40.9} &\;938.2 {\footnotesize $\pm$ 63.0}\\
Swimmer & Swimmer6  &  \colorbox{mine} {422.0}{\footnotesize  $\pm$ 133.2} &\;356.4 {\footnotesize $\pm$ 107.5} &\;215.8 {\footnotesize $\pm$ 119.0} &\;359.3 {\footnotesize $\pm$ 130.9} &\;289.2 {\footnotesize $\pm$ 133.6}\\
Swimmer & Swimmer15  &  \colorbox{mine} {295.8}{\footnotesize  $\pm$ 113.7} &\;222.8 {\footnotesize $\pm$ 128.6} &\;178.6 {\footnotesize $\pm$ 116.6} &\;264.6 {\footnotesize $\pm$ 136.9} &\;236.7 {\footnotesize $\pm$ 150.1}\\
Walker & Stand  &  \colorbox{mine} {989.6}{\footnotesize  $\pm$ 1.7} &\;986.3 {\footnotesize $\pm$ 4.5} &\;974.0 {\footnotesize $\pm$ 12.6} &\;986.8 {\footnotesize $\pm$ 2.7} &\;983.4 {\footnotesize $\pm$ 4.2}\\
Walker & Walk  &  \colorbox{mine} {974.9}{\footnotesize  $\pm$ 1.6} &\;971.9 {\footnotesize $\pm$ 4.8} &\;957.3 {\footnotesize $\pm$ 10.6} &\;973.5 {\footnotesize $\pm$ 4.4} &\;966.3 {\footnotesize $\pm$ 10.4}\\
Walker & Run  &  \colorbox{mine} {805.1}{\footnotesize  $\pm$ 19.4} &\;770.5 {\footnotesize $\pm$ 31.0} &\;590.9 {\footnotesize $\pm$ 51.6} &\;773.0 {\footnotesize $\pm$ 32.9} &\;712.1 {\footnotesize $\pm$ 65.6}\\
\midrule
Average & Scores & \colorbox{mine} {761.6 } &\;674.1 &\;606.6 &\;653.4 &\;595.3\\
\bottomrule
\end{tabular}}
\vspace{-0.2in}
\end{table*}
\textbf{Baselines.}
We conduct a comparative study of the proposed ERC algorithm with several well-established baselines in the literature. Specifically, we select TD3 \citep{td3} and SAC as our primary baselines as they are commonly used and perform well in various tasks. Additionally, we compare our method with REDQ~\citep{redq} and TQC~\citep{tqc}, which employ different techniques to improve value approximation and reduce the variance of the value function. To ensure a fair comparison, we use the authors' implementation of \href{https://github.com/sfujim/TD3}{TD3} and \href{https://github.com/watchernyu/REDQ}{REDQ} available on Github, and the public implementation of SAC provided in PyTorch~\citep{pytorch_sac}, which is also the basis of our ERC implementation. Besides, we use the implementation of TQC available in the \href{https://sb3-contrib.readthedocs.io/en/master/modules/tqc.html}{stable baselines 3} library and use the default hyper-parameters as suggested by the authors.

\textbf{Environments.}
The experimental suite is the state-based DMControl suite~\citep{dm-control}, which is for physics-based simulation, utilizing the MuJoCo physics engine~\citep{mujoco}. We chose the DMControl suite as it offers a diverse range of environments to benchmark the capabilities of RL algorithms. We facilitate the interactions between the algorithm and environment using Gym~\citep{gym}. We evaluate each algorithm over one million timesteps and obtain the average return of the algorithm every 10k timesteps over ten episodes.

\textbf{Setup.} The magnitude of the regularization effectiveness of ERC is controlled by a hyper-parameter, $\beta$, which is $5e-3$. Additionally, $\mathcal{R}_{\text{max}}$ and $\mathcal{R}_{\text{min}}$ are $1e-2$ and $0$, respectively, for all experiments. The remaining hyper-parameters are consistent with the suggestions provided by \citet{sac}. To ensure the validity and reproducibility of the experiments, unless otherwise specified, we evaluate each tested algorithm over ten fixed random seeds. For more implementation details, please refer to the appendix.

\subsection{Performance Evaluation}
\Cref{fig: ERC performance} shows the learning curves from scratch. \cref{table: ERC performance statistics} depicts the average performance after 1M timesteps training. The results demonstrate the following points: i) ERC outperforms the other tested algorithms in a majority (\textbf{20} out of \textbf{26}) of the environments. Specifically, it outperforms TQC, REDQ, SAC, and TD3 by 13\%, 25.6\%, 16.6\%, and 27.9\%, respectively; ii) ERC substantially improves upon its skeleton algorithm, SAC, by the addition of the $\mathcal{R}_{\text{push}}$ regularization term; iii) Additionally, although ERC does not employ complex techniques to eliminate estimation bias, it still substantially outperforms TQC and REDQ in most environments. Note that both REDQ and TQC leverage an ensemble mechanism to obtain a more accurate and unbiased Q estimation. ERC does not leverage any ensemble critic, distributional value functions, or high UTD ratio, yet it still outperforms them. The results above highlight the potential of utilizing structural information of MDP to improve DRL.

\subsection{Variance and Approximation Error}
\label{sec: value function estimation}

 We select four distinct environments `acrobot-swing', `humanoid-stand', `finger-turn\_easy', and `fish-swim'. This selection is made because ERC has been demonstrated to perform well in the first two environments, while its performance is not as strong in the latter two. Thus, by evaluating ERC across these four environments, a comprehensive assessment of ERC can be obtained.

 \begin{figure*}[ht]
\centering
\vspace{-0.2in}
\begin{minipage}[t]{1\textwidth}
\centering
\includegraphics[width=1.0 \textwidth]{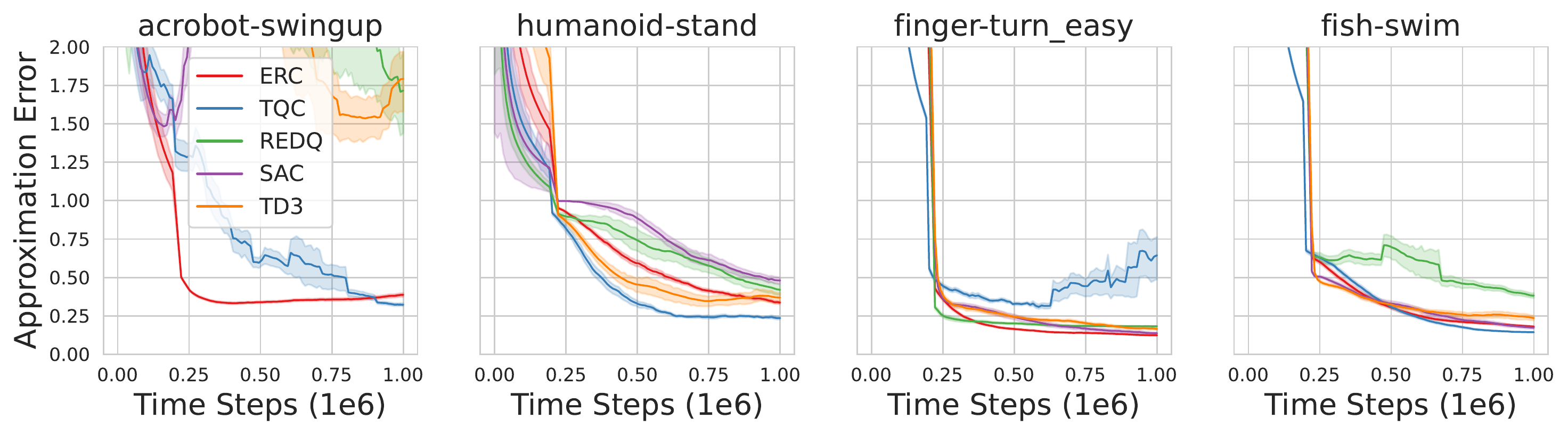}
\caption{\label{fig: bias 4}Approximation error curves.  The results demonstrate that the approximation error of ERC is empirically minimal when compared to other algorithms (such as TQC and REDQ) that are specifically designed to obtain accurate unbiased value estimation. }
\end{minipage}
\begin{minipage}[t]{1\textwidth}
\centering
\includegraphics[width=1.0 \textwidth]{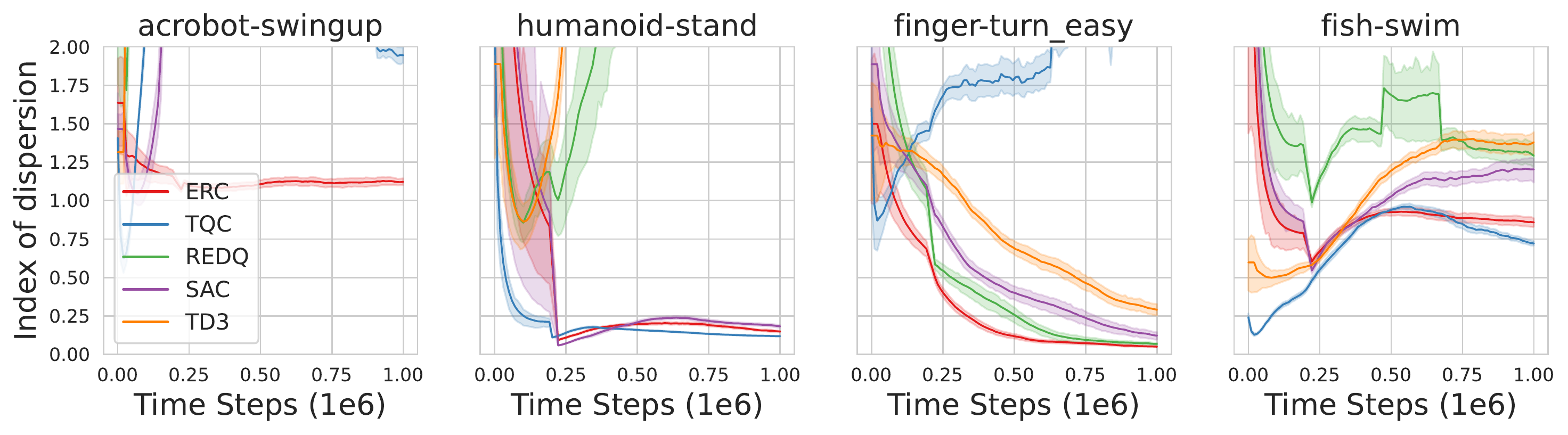}
\caption{\label{fig: dispersion 4}Index of dispersion curves. The results demonstrate that ERC effectively controls the variance. Furthermore, the variance of ERC is observed to be minimal on the selected tasks, even in situations where the performance of ERC is not as good as TQC and REDQ. \vspace{-0.1in}
}
\end{minipage}
\end{figure*}

\begin{table}[!htbp]
    \centering
    \caption{\label{table: ERC ablation}Average Return after 1M timesteps training on DMControl suite. $\beta=5e-3$-trunc means the truncation mechanism is used in the evaluation.
    }
\resizebox{\textwidth}{!}{
\begin{tabular}{lllllllll}
\toprule
\textbf{1M Steps Scores} & \;\textbf{$\beta=$1e-4} &\; \textbf{$\beta=$5e-4} &\;\textbf{$\beta=$1e-3} &\;\textbf{$\beta=$5e-3} &\;\textbf{$\beta=$5e-3-trunc} &\;\textbf{$\beta=$1e-2} &\;\textbf{$\beta=$5e-2} &\;\textbf{SAC}\\
\midrule
{\small Acrobot, Swingup }  & \;151.2 {\footnotesize $\pm$ 70.0} & \colorbox{mine} {152.7}{\footnotesize  $\pm$ 43.4} &\;91.4 {\footnotesize $\pm$ 54.3} &\;151.9 {\footnotesize $\pm$ 69.3} &\;151.0 {\footnotesize $\pm$ 36.8} &\;129.9 {\footnotesize $\pm$ 58.7} &\;114.7 {\footnotesize $\pm$ 31.4} &\;26.9 {\footnotesize $\pm$ 47.7}\\
{\small Humanoid, Stand }  & \;418.2 {\footnotesize $\pm$ 255.3} &\;755.8 {\footnotesize $\pm$ 88.3} &\;692.1 {\footnotesize $\pm$ 200.2} &\;699.0 {\footnotesize $\pm$ 129.3} & \colorbox{mine} {804.5}{\footnotesize  $\pm$ 39.1} &\;742.4 {\footnotesize $\pm$ 130.8} &\;550.0 {\footnotesize $\pm$ 217.2} &\;549.6 {\footnotesize $\pm$ 201.0}\\
{\small Finger, TurnEasy }  & \;940.4 {\footnotesize $\pm$ 49.5} &\;919.8 {\footnotesize $\pm$ 80.5} &\;959.2 {\footnotesize $\pm$ 37.3} &\;979.4 {\footnotesize $\pm$ 5.5} & \colorbox{mine} {981.1}{\footnotesize  $\pm$ 5.4} &\;979.9 {\footnotesize $\pm$ 5.3} &\;959.7 {\footnotesize $\pm$ 43.7} &\;920.0 {\footnotesize $\pm$ 91.8}\\
{\small Fish, Swim }  & \;448.5 {\footnotesize $\pm$ 102.8} &\;436.9 {\footnotesize $\pm$ 71.4} &\;468.3 {\footnotesize $\pm$ 130.5} &\;390.0 {\footnotesize $\pm$ 65.8} & \colorbox{mine} {496.8}{\footnotesize  $\pm$ 61.6} &\;414.4 {\footnotesize $\pm$ 55.0} &\;434.9 {\footnotesize $\pm$ 61.4} &\;342.4 {\footnotesize $\pm$ 134.5}\\
\bottomrule
\end{tabular}
}
\vspace{-0.1in}
\end{table}
\textbf{Approximation Error.} ERC aims to push the Bellman error to 1-eigensubspace to obtain an efficient and stable value approximation as discussed in \cref{sec: Improve value approximation via Eigensubspace regularization}. Thus we study the approximation error of ERC on the DMControl suite. We obtain the true value $Q^*$ using the Monte Carlo method and compare it to the estimated Q. We normalize the approximation error by the estimation value to eliminate the differences in scale, and the absolute value of the approximation error is used in each sample to eliminate inaccuracies caused by offsetting positive and negative errors. The results, presented in \cref{fig: bias 4}, indicate that ERC exhibits minimal approximation error compared to other methods that obtain accurate and unbiased value estimates.

\textbf{Variance Reduction.} Above analysis shows that ERC reduces the value function variance. To investigate this claim in \cref{sec: Improve value approximation via Eigensubspace regularization}, we empirically analyze the variance of ERC. To eliminate the effect of the size of the value function on the variance, we use \textit{index of dispersion} \citep{indices}, which can be considered as a variance normalized by its mean, to evaluate the variance of algorithms. Our results, presented in \cref{fig: dispersion 4}, indicate that i) the variance of the value function of ERC is the lowest or equally low as other methods (TQC, and REDQ) specifically designed for variance reduction. And ii) ERC effectively controls variance even in environments where its performance is not as strong as other methods such as TQC and REDQ. These findings provide strong evidence for the theoretical claim that ERC reduces variance of the value function.

\subsection{Ablation}

The component introduced by ERC, as outlined in \cref{eq: ERC loss network form}, is sole ${\mathcal{R}}_{\text{push}}$, in which $\beta$ regulates the efficiency of pushing the Bellman error towards the 1-eigensubspace. Therefore, investigating the selection of $\beta$ can aid in comprehending the impact of hyper-parameter on ERC.
We vary $\beta$ and eliminate the truncation mechanism outlined in \cref{eq: ERC beta}. Experiments are conducted on the same four environments as described in \cref{sec: value function estimation}. The results, presented in \cref{table: ERC ablation}, indicate the following: i) the value of $\beta$ influences the empirical performance of the ERC, yet ERC with various $\beta$ values consistently outperforms its skeleton algorithm, SAC, in most settings. That demonstrates the effectiveness of our proposed value approximation method; ii) Truncating parameters can improve the performance of ERC on specific tasks (Humanoid-Stand, Fish-Swim); iii) Carefully tuning both the hyper-parameter $\beta$ and the truncation mechanism ensures optimal performance.

\section{Related Work}

\subsection{Value Function Approximation}
\citet{nature_dqn} utilizes neural networks (NN) to approximate value function by TD learning~\citep{q-learning, td-gammon}. Combined with NN, several derivative value approximation methods~\citep{nature_dqn,c51,qr-dqn,iqn} exist. Some ~\citep{c51,iqn,qr-dqn,tqc} use a distributional view to obtain a better value function. Some reduce bias and variance of value function approximation~\cite{hasselt2010double,ddqn,td3, lan2019maxmin, redq, wd3}. Others~\citep{retrace,impala,sac} develop more robust value function approximation methods. ERC utilizes structural information from MDP to improve value approximation, which distinguishes it from previous work.

\subsection{Using Structural Information of MDPs}
Recent work~\citep{low-rank-1, lowrank2, low-rank-3, low-rank-4} focus on the property of low-rank MDP. They study learning an optimal policy assuming a low-rank MDP given the optimal representation. Nevertheless, such a setting is not practical as the optimal representation is computationally intractable. \Citet{tongzhen22sdrr} proposed a practical algorithm under low-rank MDP assumption, which, different from ERC, involves estimating the dynamics of MDP. Some other work \citep{lyle2021understanding, lyle2021effect,  agarwal2019reinforcement} discuss the architecture of MDP from a matrix decomposition perspective. \Citet{lyle2021effect} establishes a connection between the spectral decomposition of the transition operator and the representations of $V$ function induced by a variety of auxiliary tasks. ERC differs from \citet{lyle2021effect} in the following aspects: i) We analyze the dynamics of the Q approximation error, which is more commonly studied in DRL literature; ii) We consider the learning process of $Q$ function, whereas \citet{lyle2021effect} considers the dynamics of representations.

\section{Conclusion}
In this work, we examine the eigensubspace of the TD dynamics and its potential use in improving value approximation in DRL. We begin by analyzing the matrix form of the Bellman equation and subsequently derive the dynamics of the approximation error through the solution of a differential equation. This solution depends on the transition kernel of the MDP, which motivates us to perform eigenvalue decomposition, resulting in the inherent path of value approximation in TD. To the best of our knowledge, this inherent path has not been leveraged to design DRL algorithms in previous work. Our insight is to improve value approximation by directing the approximation error towards the 1-eigensubspace, resulting in a more efficient and stable path. Thus, we propose the ERC algorithm with a theoretical convergence guarantee. Theoretical analysis and experiments demonstrate ERC results in a variance reduction, which validates our insight. Extensive experiments on the DMControl suite demonstrate that ERC outperforms state-of-the-art algorithms in the majority of tasks. The limitation is that ERC is evaluated on the DMControl suite. Verifying the effectiveness of ERC on other suites is left for future work. Our contributions represent a significant step forward in leveraging the rich inherent structure of MDP to improve value approximation and ultimately enhance performance in RL.

\textbf{Acknowledgements.} Thanks to Xihuai Wang, Yucheng Yang, and Qifu Hu for their helpful discussions. This research was supported by Grant 01IS20051 and Grant 16KISK035 from the German Federal Ministry of Education and Research (BMBF).

\section*{Ethical Statement}
This work investigates the eigensubspace of the Temporal-Difference algorithm of reinforcement learning and how it improves DRL. The experimental part only uses a physical simulation engine DMControl. Therefore this work does not involve any personal information. This work may be used in areas such as autonomous vehicles, games, and robot control.
\normalem
\bibliographystyle{splncs04nat}
\bibliography{sample.bib}


\appendix
\onecolumn
\section{Notations}

\begin{table*}[!ht]
    \centering
    \caption{\label{app table: notations}Notations used in this work. 
    }
\begin{tabular}{l|l}
\toprule
\textbf{Symbol} & \textbf{Description} \\
\midrule
$\mathcal{S}$ & State space  \\
$\mathcal{A}$ & Action space \\
$R$ & Reward function \\
$R_t$ & Return $R_t=\sum_{i=t}^{T}\gamma^{i-t} r(s_i, a_i)$ \\
$r$ & A reward \\
$P$ & Transition kernel (or transition probability), $P: \mathcal{S} \times \mathcal{A} \rightarrow p(s)$ \\
$\gamma$ & Discount factor, $\gamma \in [0, 1)$ \\
$\mathcal{D}$ & Replay buffer \\
$\rho_0$ & Initial state distribution \\
$p(s)$ & A state distribution \\
$Q^{\pi}(s, a)$ & Action value function \\
$\tau$ & Trajectory, a sequence of state and action \\
$\pi$ & Policy \\
$V^\pi(s)$ & State value function given policy $\pi$ \\
$\alpha$ & Temperature for MaxEnt RL~\citep{sac} \\
$\mathcal{H}$ & Pre-selected target entropy. \\
$\theta$ & neural network parameters for value function \\
$\phi$ & neural network parameters for policy function \\
\midrule
$P^{\pi}_{s,a,s',a'}$ & Transition matrix, $P^{\pi}_{s,a,s',a'}:= P(s'|s,a)\pi (a'|s').$ \\
$\mathbf{V}^\pi$ & Vector of all V value with length $|\mathcal{S}|$ \\
$\mathbf{Q}^\pi$ & Vector of all Q value with length $|\mathcal{S}|\cdot |\mathcal{A}|$ \\
$\mathbf{r}$ & Vector of all reward with length $|\mathcal{S}|\cdot |\mathcal{A}|$  \\
$\mathbf{e}$ & A column of all ones \\ 
$I$ & Identity matrix \\ 
$(\lambda_i, H_i)$ & Eigenpair \\
$(\mathfrak{B}, \| \cdot \|)$ & A Banach space equipped with a norm  $\| \cdot \|$ \\
$\beta$ & Hyper-parameter for ERC, controlling trend to 1-eigensubspace \\
\bottomrule
\end{tabular}
\end{table*}

\section{Theoretical Derivations}
In this section, theorems and lemmas in the main text are restated, and are given the related proof.

 




\subsection{An Inherent Path of Value Approximation}\label{app: sec: dynamics of bellman error}






\begin{lemma}[Dynamics of approximation error] \label{app thm: dynamics of Q}
    Consider a continuous sequence $\{Q_t | t \geq 0 \}$, satisfy \cref{eq: dynamics differential form} with initial condition $\mathbf{Q}_0$ at time step $t=0$, then
    \begin{equation}\label{app eq: dynamics of approximation error}
        \mathbf{Q}_t - \mathbf{Q}^* = \exp \{ -t (I - \gamma P^\pi)   \} (\mathbf{Q}_0 - \mathbf{Q}^*) .
    \end{equation}
\end{lemma}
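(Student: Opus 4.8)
The plan is to verify that the claimed expression solves the linear ODE in \cref{eq: dynamics differential form} together with the prescribed initial condition, and then invoke uniqueness of solutions to linear ODEs. First I would introduce the error variable $\mathbf{E}_t := \mathbf{Q}_t - \mathbf{Q}^*$ and rewrite the dynamics in terms of it. Since $\mathbf{Q}^*$ is the fixed point of the Bellman equation, by \cref{corollary: Q pi} we have $(I - \gamma P^\pi)\mathbf{Q}^* = \mathbf{r}$, hence $-(I-\gamma P^\pi)\mathbf{Q}_t + \mathbf{r} = -(I-\gamma P^\pi)\mathbf{Q}_t + (I-\gamma P^\pi)\mathbf{Q}^* = -(I-\gamma P^\pi)\mathbf{E}_t$. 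Because $\mathbf{Q}^*$ is constant in $t$, $\partial_t \mathbf{E}_t = \partial_t \mathbf{Q}_t$, so the dynamics becomes the homogeneous linear system $\partial_t \mathbf{E}_t = -(I - \gamma P^\pi)\mathbf{E}_t$ with $\mathbf{E}_0 = \mathbf{Q}_0 - \mathbf{Q}^*$.

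Next I would recall the standard fact that the unique solution of the matrix ODE $\partial_t \mathbf{E}_t = M \mathbf{E}_t$ with initial value $\mathbf{E}_0$ is $\mathbf{E}_t = \exp\{tM\}\mathbf{E}_0$, where $\exp\{tM\} = \sum_{k=0}^{\infty} \frac{(tM)^k}{k!}$ is the matrix exponential. Applying this with $M = -(I - \gamma P^\pi)$ yields $\mathbf{E}_t = \exp\{-t(I-\gamma P^\pi)\}(\mathbf{Q}_0 - \mathbf{Q}^*)$, which is exactly \cref{app eq: dynamics of approximation error}. To make the argument self-contained I would briefly confirm the two defining properties: (i) at $t=0$, $\exp\{0\} = I$ so $\mathbf{E}_0$ is recovered; and (ii) differentiating the series term by term (justified by its absolute and uniform convergence on compact $t$-intervals, as $\|(tM)^k/k!\| \le \|tM\|^k/k!$) gives $\partial_t \exp\{tM\} = M\exp\{tM\}$, so the proposed $\mathbf{E}_t$ indeed satisfies the ODE.

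There is essentially no hard obstacle here: the entire content is reducing the affine ODE to a homogeneous one via the fixed-point identity for $\mathbf{Q}^*$, and then citing the matrix-exponential solution formula. The only point requiring a word of care is the justification of uniqueness (so that the exhibited solution is \emph{the} solution), which follows from the standard Picard--Lindelöf theorem since the right-hand side $\mathbf{E} \mapsto -(I-\gamma P^\pi)\mathbf{E}$ is globally Lipschitz (linear). I would state this briefly rather than reprove it. Optionally, I could also note that $I - \gamma P^\pi$ is invertible (its eigenvalues are $1 - \gamma\lambda_i$ with $|\lambda_i| \le 1$ and $\gamma < 1$, so they are nonzero), which is what makes $\mathbf{Q}^*$ in \cref{corollary: Q pi} well-defined, but this is not strictly needed for the lemma itself.
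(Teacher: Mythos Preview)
Your proposal is correct and follows essentially the same approach as the paper: the paper's proof is a one-line remark that the ODE \cref{eq: dynamics differential form} ``can be solved directly with the help of \cref{corollary: Q pi},'' and your argument simply spells out that computation---using the fixed-point identity $(I-\gamma P^\pi)\mathbf{Q}^* = \mathbf{r}$ to reduce to the homogeneous system and then applying the matrix-exponential formula. Your added remarks on uniqueness and invertibility are fine but more than the paper itself supplies.
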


\begin{proof}
    This ordinary differential equation can be solved directly with the help of \cref{corollary: Q pi}.
    \hfill$\square$
\end{proof}


\begin{theorem}
\Cref{assumption: P pi diagnoalize} holds,
then $\mathbf{Q}_t - \mathbf{Q}^* = \alpha_1 \exp\{ t(\gamma \lambda_1 -1 \}H_1 + \sum_{i=2}^{|\mathcal{S}|\cdot |\mathcal{A}|} \alpha_i \exp \{ t(\gamma \lambda_i -1) \} H_i = \alpha_1 \exp\{ t(\gamma \lambda_1 -1 \}H_1 + o\Big( \alpha_1 \exp\{ t(\gamma \lambda_1 -1 \} \Big)$ 
\end{theorem}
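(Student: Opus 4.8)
The plan is to combine \cref{app thm: dynamics of Q} with the eigendecomposition supplied by \cref{assumption: P pi diagnoalize}. First I would expand the initial approximation error in the eigenbasis: since $P^\pi$ is real-diagonalizable, the eigenvectors $H_1,\dots,H_{|\mathcal{S}|\cdot|\mathcal{A}|}$ form a basis of $\mathbb{R}^{|\mathcal{S}|\cdot|\mathcal{A}|}$, so there are unique scalars $\alpha_i$ with $\mathbf{Q}_0 - \mathbf{Q}^* = \sum_{i=1}^{|\mathcal{S}|\cdot|\mathcal{A}|} \alpha_i H_i$. This fixes the constants $\alpha_i$ appearing in the statement.

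Second, I would push this expansion through the matrix exponential in \cref{app eq: dynamics of approximation error}. Each $H_i$ is an eigenvector of $I-\gamma P^\pi$ with eigenvalue $1-\gamma\lambda_i$, hence — expanding $\exp\{-t(I-\gamma P^\pi)\}$ as a power series and applying it termwise to $H_i$ — an eigenvector of $\exp\{-t(I-\gamma P^\pi)\}$ with eigenvalue $\exp\{-t(1-\gamma\lambda_i)\}=\exp\{t(\gamma\lambda_i-1)\}$. By linearity,
\[
\mathbf{Q}_t-\mathbf{Q}^* = \sum_{i=1}^{|\mathcal{S}|\cdot|\mathcal{A}|}\alpha_i\exp\{t(\gamma\lambda_i-1)\}H_i = \alpha_1\exp\{t(\gamma\lambda_1-1)\}H_1 + \sum_{i=2}^{|\mathcal{S}|\cdot|\mathcal{A}|}\alpha_i\exp\{t(\gamma\lambda_i-1)\}H_i,
\]
which is the first claimed identity.

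Third, for the asymptotic ($t\to\infty$) statement I would compare each subdominant term with the leading one. By \cref{remark: Eigenpair for stochastic matrix}, $\lambda_1=1$, and by \cref{assumption: P pi diagnoalize}, $|\lambda_i|<|\lambda_1|=1$ for $i\ge 2$, so $\lambda_i<1=\lambda_1$ and therefore $\gamma(\lambda_i-\lambda_1)<0$ (using $\gamma>0$). Hence for each $i\ge 2$,
\[
\frac{\alpha_i\exp\{t(\gamma\lambda_i-1)\}H_i}{\alpha_1\exp\{t(\gamma\lambda_1-1)\}} = \frac{\alpha_i}{\alpha_1}\,\exp\{t\gamma(\lambda_i-\lambda_1)\}\,H_i \xrightarrow[t\to\infty]{} 0,
\]
and since the sum over $i\ge 2$ has finitely many terms, it is $o\big(\alpha_1\exp\{t(\gamma\lambda_1-1)\}\big)$, which is the second identity.

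There is no real obstacle here: the argument is routine linear algebra once the eigenbasis expansion is in place. The only points requiring care are the two implicit nondegeneracy conditions, which I would state explicitly — $\gamma>0$, so that the exponential ratios genuinely decay, and $\alpha_1\neq 0$, i.e.\ the initial error $\mathbf{Q}_0-\mathbf{Q}^*$ has a nonzero component along $H_1=\mathbf{e}$ (otherwise the $o(\cdot)$ comparison against $\alpha_1\exp\{t(\gamma\lambda_1-1)\}$ is vacuous), the latter holding for generic $\mathbf{Q}_0$.
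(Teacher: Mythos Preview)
Your proposal is correct and follows essentially the same route as the paper's own proof: expand $\mathbf{Q}_0-\mathbf{Q}^*$ in the eigenbasis of $P^\pi$, push it through the matrix exponential via \cref{app thm: dynamics of Q}, and separate out the $\lambda_1$ term. Your write-up is in fact more careful than the paper's, which asserts the $o(\cdot)$ statement without argument; your explicit check that $\gamma(\lambda_i-\lambda_1)<0$ for $i\ge 2$ and your flagging of the implicit hypotheses $\gamma>0$ and $\alpha_1\neq 0$ are welcome additions.
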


\begin{proof}
We have
    \begin{equation}
        \mathbf{Q}_0 - \mathbf{Q}^* = \sum_i \alpha_i H_i,
    \end{equation}
where $\alpha_i$ is some constants. Note that $\mathbf{Q}_t$ is in the column space of $P^\pi$ and $\{H_i\}$ can is a basis of $P^\pi$.

Recall \cref{thm: dynamics of Q}, we have 
    \begin{equation}    
    \begin{aligned}    
\mathbf{Q}_t - \mathbf{Q}^* &= \exp \{ -t (I - \gamma P^\pi)   \} (\mathbf{Q}_0 - \mathbf{Q}^*) \\    &= \exp \{ -t (I - \gamma P^\pi)   \}   \sum_i \alpha_i H_i\\    
&=      \sum_i \alpha_i \exp \{ -t (I - \gamma P^\pi)   \} H_i          \\
&= \sum_i \alpha_i \exp \{ -t (1 - \gamma \lambda_i)   \} H_i  \\
&= \alpha_1 \exp\{ t(\gamma \lambda_1 -1 ) \} H_1 + \sum_{i=2} \alpha_i \exp\{ t(\gamma \lambda_i -1 )\} H_i \\    
&= \alpha_1 \exp\{ t(\gamma -1)\}H_1 + o\Big( \alpha_1 \exp\{ t(\gamma  -1 \} \Big).      
\end{aligned}    
\end{equation}
The fourth equation holds because $\exp\{ -t(I-\gamma P^\pi )\}$ is also diagonalizable under the same basis $\{H_i\}$, with the eigenvalue $\exp \{ t(\gamma \lambda_i -1 )\}$, $i \in \{1,2, \cdots, |\mathcal{S}|\cdot |\mathcal{A}| \}$.
\hfill$\square$
\end{proof}

\begin{lemma}
    Consider a Banach space $(\mathfrak{B}, \| \cdot \|)$ of dimension N, and let the N-dimensional Bellman error at timestep $t$, represented by $\mathbf{B}^t$, have coordinates $(B_1, B_2, \cdots, B_N)$ in $(\mathfrak{B}, \| \cdot \|)$. Within this Banach space, the projected point in the 1-eigensubspace, which is closest to $B^t$, is $Z^t$ whose coordinates are $(z^t, z^t,\cdots, z^t)$ where $z^t = \frac{1}{N}\sum_{j=1}^{N} B_i^t$.
\end{lemma}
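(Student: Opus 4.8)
The plan is to recognize this as a least–squares projection onto a one-dimensional subspace, namely the span of $\mathbf{e} = (1,1,\dots,1)$, and to minimize the squared distance from $\mathbf{B}^t$ to a generic point $c\,\mathbf{e}$ of the $1$-eigensubspace. Concretely, I would define $f(c) = \|\mathbf{B}^t - c\,\mathbf{e}\|_2^2 = \sum_{i=1}^{N}(B_i^t - c)^2$, which is a smooth, strictly convex quadratic in the scalar $c$, so its unique minimizer is found by setting $f'(c) = 0$.

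The key steps, in order, are: first, expand $f(c) = \sum_{i=1}^N (B_i^t)^2 - 2c\sum_{i=1}^N B_i^t + Nc^2$; second, differentiate to get $f'(c) = -2\sum_{i=1}^N B_i^t + 2Nc$; third, solve $f'(c)=0$ to obtain $c = z^t := \frac{1}{N}\sum_{i=1}^N B_i^t$; fourth, note $f''(c) = 2N > 0$, confirming this critical point is the global minimum. The closest point is then $Z^t = z^t\,\mathbf{e}$, whose coordinates are $(z^t,\dots,z^t)$ as claimed. Equivalently, one can invoke the orthogonal-projection characterization: $Z^t$ is the projection of $\mathbf{B}^t$ onto $\mathrm{span}(\mathbf{e})$, given by $\langle \mathbf{B}^t, \mathbf{e}\rangle / \langle \mathbf{e},\mathbf{e}\rangle \cdot \mathbf{e} = \big(\tfrac{1}{N}\sum_i B_i^t\big)\mathbf{e}$, which requires that $\mathbf{B}^t - Z^t$ be orthogonal to $\mathbf{e}$, i.e., $\sum_i (B_i^t - z^t) = 0$, a direct check.

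There is essentially no serious obstacle here: the statement is a standard fact about projecting a vector onto the line $\mathbb{R}\mathbf{e}$, and the only thing to be slightly careful about is the phrasing ``Banach space'' — since the result invokes the $\ell_2$ norm implicitly (the projection and the averaging formula are the $\ell_2$ answer), I would tacitly treat $(\mathfrak{B}, \|\cdot\|)$ as the Euclidean space $\mathbb{R}^N$ with the standard inner product, consistent with how $\mathcal{R}_{\text{push}}$ uses $\|\cdot\|_2^2$ in \cref{eq: PE regularizer}. The one-line calculus argument above is the cleanest route and is what I would write out.
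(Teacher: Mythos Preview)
Your proposal is correct and follows essentially the same approach as the paper: set up the squared-distance minimization $\min_z \sum_{i=1}^N (B_i^t - z)^2$, observe it is convex, and read off the minimizer $z^t = \frac{1}{N}\sum_i B_i^t$. Your write-up is simply more explicit (computing $f'$ and $f''$ and noting the orthogonal-projection equivalence), and your remark about tacitly working in the Euclidean $\ell_2$ setting is a valid clarification that the paper leaves implicit.
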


\begin{proof}
    Assume $Z$ is in 1-eigensubspace whose coordinates are $(z^t, z^t,\cdots, z^t)$ with dimension N. The distance from $Z^t$ to $N^t$ is $\sum_i^N (B_i^t - z^t)^2$. To find the coordinates of $Z^t$, we can build the following optimization problem:
    \begin{equation}
        \min_z \sum_i^N (B_i^t - z^t)^2,
    \end{equation}
    which is a convex optimization problem. The solution is $z^t = \frac{1}{N}\sum_{j=1}^{N} B_i^t$.

    \hfill$\square$
\end{proof}

\subsection{Theoretical Analysis}

\begin{lemma}\label{app theorem: ERC update rule}
Given the ERC update rules in \cref{eq: ERC loss network form}, ERC updates the value function in tabular form in the following way
\begin{equation}\label{app eq: ERC update rule tabular setting}
\begin{aligned}
     Q_{t+1} = (1- \alpha_t (1+\beta) )  Q_t + \alpha_t (1+\beta) \mathcal{B}Q_t -  \alpha_t\beta C_t,
\end{aligned}
\end{equation}
where $\mathcal{B}Q_t(s_t,a_t) = r_t + \gamma \mathbb{E}_{s_{t+1},a_{t+1}} Q_t(s_{t+1}, a_{t+1})$, $C_t = 2 \mathbb{NG}(\mathbb{E} [ \mathcal{B}Q_t - Q_t ])$, and $\mathbb{NG}$ means stopping gradient.
\end{lemma}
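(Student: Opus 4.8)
The plan is to start from the ERC loss $\mathcal{L}_{\text{ERC}}(\theta) = \mathcal{L}_{\text{PE}}(\theta) + \beta \mathcal{R}_{\text{push}}(\theta)$ and perform one gradient-descent step on the tabular parameterization, where $\theta$ is identified with the table of values $Q(s,a)$. First I would write down the policy-evaluation term at the sampled pair $(s_t,a_t)$, namely $\mathcal{L}_{\text{PE}} = \big(Q_t(s_t,a_t) - \mathcal{B}Q_t(s_t,a_t)\big)^2$, treating the bootstrap target $\mathcal{B}Q_t(s_t,a_t) = r_t + \gamma \mathbb{E}_{s_{t+1},a_{t+1}} Q_t(s_{t+1},a_{t+1})$ as a stop-gradient constant (the standard semi-gradient TD convention). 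Its derivative with respect to the entry $Q_t(s_t,a_t)$ is $2\big(Q_t - \mathcal{B}Q_t\big)$, so the semi-gradient TD part of the update contributes $-\alpha_t \cdot 2(Q_t - \mathcal{B}Q_t) = 2\alpha_t(\mathcal{B}Q_t - Q_t)$ (absorbing the factor $2$ into the step size in the usual way, or carrying it through consistently).

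Next I would differentiate $\mathcal{R}_{\text{push}}(\theta) = \frac{1}{N}\sum_{i=1}^N \|B_i - Z\|_2^2$, where $B_i$ is the Bellman error on the $i$-th sampled transition and $Z = \frac{1}{N}\sum_j B_j$. The key observation is that $B_i = Q_t - \mathcal{B}Q_t$ evaluated at transition $i$, so on the diagonal entry corresponding to $(s_t,a_t)$ we have $\partial B_i/\partial Q_t(s_t,a_t) = 1$ for the matching index and the dependence of $Z$ is handled by noting that, after expanding the square as in \cref{eq: R push expanding}, the variance-of-$Q$ term has gradient $2(Q_t - \mathbb{E}[Q_t])$ while the cross term and variance-of-$\mathcal{B}Q$ term (with $\mathcal{B}Q$ treated as stop-gradient) contribute the correction. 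Concretely, $\mathcal{R}_{\text{push}} = \mathbb{E}\big[(B - \mathbb{E}B)^2\big]$ with $B = Q_t - \mathcal{B}Q_t$, and differentiating through $Q_t$ only gives $\nabla \mathcal{R}_{\text{push}} = 2\mathbb{E}\big[(B - \mathbb{E}B)\big] \cdot (\text{something})$ — more carefully, $2\big((Q_t - \mathcal{B}Q_t) - \mathbb{E}[Q_t - \mathcal{B}Q_t]\big)$, which splits into a term proportional to $(\mathcal{B}Q_t - Q_t)$ (matching the TD direction) plus the constant $C_t = 2\,\mathbb{NG}(\mathbb{E}[\mathcal{B}Q_t - Q_t])$. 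Collecting the two contributions and grouping the $Q_t$ and $\mathcal{B}Q_t$ coefficients yields exactly $Q_{t+1} = (1 - \alpha_t(1+\beta))Q_t + \alpha_t(1+\beta)\mathcal{B}Q_t - \alpha_t\beta C_t$.

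The main obstacle — and the step that needs the most care — is bookkeeping the stop-gradient conventions: one must decide that (i) the bootstrap target $\mathcal{B}Q_t$ carries no gradient, and (ii) inside $\mathcal{R}_{\text{push}}$ the mean term $Z$ (equivalently $\mathbb{E}[B]$) is also detached, which is precisely why $C_t$ appears as a $\mathbb{NG}(\cdot)$ constant rather than generating additional $Q_t$-dependent terms. With those conventions fixed, the computation is a routine chain-rule expansion of \cref{eq: R push expanding}; the identity then follows by matching coefficients of $Q_t$, $\mathcal{B}Q_t$, and the residual constant. I would present it by first writing the per-parameter gradient of each of the two loss terms, then substituting into the gradient-descent step $Q_{t+1} = Q_t - \alpha_t \nabla \mathcal{L}_{\text{ERC}}$, and finally simplifying.
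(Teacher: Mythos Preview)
Your proposal is correct and follows essentially the same route as the paper: fix the stop-gradient conventions on both $\mathcal{B}Q_t$ and the batch mean $Z=\mathbb{E}[B]$, differentiate $\mathcal{L}_{\text{PE}}+\beta\mathcal{R}_{\text{push}}$ with respect to the tabular entry, and substitute into the gradient-descent step. The paper's only cosmetic difference is that it first rewrites $\mathcal{R}_{\text{push}}$ algebraically as $\mathcal{L}_{\text{PE}} - C_t\,\mathbb{E}[\mathbb{NG}(\mathcal{B}Q)-Q] + \text{const}$ before taking the gradient, which makes the $(1+\beta)$ coefficient and the $-\alpha_t\beta C_t$ term drop out immediately without the intermediate hedging.
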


\begin{proof}
For $\mathcal{R}_{\text{push}}$, we have
    \begin{equation}
        \begin{aligned}
                \mathcal{R}_{\text{push}} &= \mathbb{E}_{s,a} \Big( (Q-\mathbb{NG}(\mathcal{B}Q)) - \mathbb{NG}(\mathbb{E} [Q - \mathcal{B}Q] ) \Big) ^2  \\ 
                &= \mathbb{E}_{s,a} ( (Q-\mathbb{NG}(\mathcal{B}Q)) )^2 - 2 \mathbb{E} [\mathbb{NG}(\mathcal{B}Q) - Q ] \mathbb{NG}(\mathbb{E} [\mathcal{B}Q - Q ]) +  (\mathbb{NG}(\mathbb{E}[ Q - \mathcal{B}Q ] ))^2  \\
                &= \mathcal{L}_{\text{PE}} - 2 \mathbb{E} [\mathbb{NG}(\mathcal{B}Q) - Q ] \mathbb{NG}(\mathbb{E} [ \mathcal{B}Q - Q ]) + (\mathbb{NG}(\mathbb{E}[\mathcal{B}Q - Q  ] ))^2 \\
                &= \mathcal{L}_{\text{PE}} - C_t \mathbb{E} [ \mathbb{NG}(\mathcal{B}Q) - Q  ] + \underbrace{ (\mathbb{NG}(\mathbb{E}[ Q - \mathcal{B}Q ] ))^2}_{ \text{No contribution to gradient}},
        \end{aligned}
    \end{equation}
where $C_t = 2 \mathbb{NG}(\mathbb{E} [ \mathcal{B}Q_t - Q_t ])$. 

Now we consider the gradient of ERC w.r.t. $Q$.

\begin{equation}
    \begin{aligned}
        \nabla_Q  \mathcal{L}_{\text{ERC}} &= \nabla_Q \mathcal{L}_{\text{PE}} + \nabla_Q \beta \mathcal{R}_{\text{push}} \\
        &= (1+\beta) \nabla_Q \mathcal{L}_{\text{PE}}  -  \beta C_t \nabla_Q \mathbb{E} [\mathbb{NG}( \mathcal{B}Q) - Q  ]  \\
        &= (1+\beta) (Q - \mathcal{B}Q) + \beta C_t .
    \end{aligned}
\end{equation}
Thus we have the following update rule
\begin{equation}
    \begin{aligned}
          Q_{t+1} & = Q_t - \alpha_t \nabla_{Q_t}  \mathcal{L}_{\text{ERC}} \\
          &= Q_t - \alpha_t (1+\beta) (Q_t - \mathcal{B}Q_t) -  \alpha_t\beta C_t \\
        &= (1- \alpha_t (1+\beta) )  Q_t + \alpha_t (1+\beta) \mathcal{B}Q_t -  \alpha_t\beta C_t.
    \end{aligned}
\end{equation}


\end{proof}

\begin{theorem}[Convergence of 1-Eigensubspace Regularized Value Approximation]\label{app thm: convergence}
    Consider the Bellman backup operator $\mathcal{B}$ and a mapping $Q: \mathcal{S} \times \mathcal{A} \rightarrow \mathbb{R}$, and $Q^{k}$ is updated with \cref{app eq: ERC update rule tabular setting}. Then the sequence $\{Q^k\}_{k=0}^{\infty}$ will converge to 1-eigensubspace regularized optimal Q value of $\pi$ as $k \rightarrow \infty$. 
\end{theorem}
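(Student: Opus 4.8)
The plan is to recast the ERC tabular update \cref{app eq: ERC update rule tabular setting} as a standard stochastic-approximation recursion with a modified operator, and then apply the asynchronous stochastic-approximation lemma of \citet{convergence} exactly as in the soft policy-evaluation argument of \citet{sac}. First I would absorb the extra $(1+\beta)$ factor into the step size: setting $\tilde\alpha_t := (1+\beta)\alpha_t$ and using $\alpha_t\beta/\tilde\alpha_t = \beta/(1+\beta)$, the update \cref{app eq: ERC update rule tabular setting} becomes
\[
Q_{t+1} = (1-\tilde\alpha_t)\,Q_t + \tilde\alpha_t\,\mathcal{T}Q_t, \qquad \mathcal{T}Q := \mathcal{B}Q - \tfrac{2\beta}{1+\beta}\,\mathbb{E}\!\left[\mathcal{B}Q - Q\right]\mathbf{e},
\]
where the expectation is over the fixed sampling distribution of $(s,a)$, so that $\mathcal{T}$ is a genuine deterministic operator on $Q$-functions. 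Since $\tilde\alpha_t$ is a constant multiple of $\alpha_t$, the Robbins--Monro conditions $\sum_t\tilde\alpha_t=\infty$ and $\sum_t\tilde\alpha_t^2<\infty$ are inherited.

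The core of the argument is to show $\mathcal{T}$ is a contraction in $\|\cdot\|_\infty$. Writing $\Delta=Q_1-Q_2$, the Bellman operator contributes $\|\mathcal{B}Q_1-\mathcal{B}Q_2\|_\infty\le\gamma\|\Delta\|_\infty$, while for the mean-subtraction term $\big|\mathbb{E}[\mathcal{B}Q_1-Q_1]-\mathbb{E}[\mathcal{B}Q_2-Q_2]\big|\le\mathbb{E}|\mathcal{B}Q_1-\mathcal{B}Q_2|+\mathbb{E}|\Delta|\le(1+\gamma)\|\Delta\|_\infty$, and $\|\mathbf{e}\|_\infty=1$. Hence
\[
\|\mathcal{T}Q_1-\mathcal{T}Q_2\|_\infty \;\le\; \Big(\gamma+\tfrac{2\beta(1+\gamma)}{1+\beta}\Big)\|\Delta\|_\infty \;=:\; \kappa\,\|\Delta\|_\infty,
\]
and $\kappa<1$ whenever $\beta<\tfrac{1-\gamma}{1+3\gamma}$, i.e.\ for all sufficiently small $\beta$ --- the regime in which ERC is run. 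By the Banach fixed-point theorem $\mathcal{T}$ then has a unique fixed point $Q^{\pi}_{\mathrm{reg}}$, which I would take as the definition of the $1$-eigensubspace regularized optimal $Q$ value of $\pi$; taking expectations in $Q=\mathcal{T}Q$ forces $\mathbb{E}[\mathcal{B}Q^{\pi}_{\mathrm{reg}}-Q^{\pi}_{\mathrm{reg}}]=0$, so it in fact coincides with $\mathbf{Q}^{\pi}$, but only its existence and uniqueness are needed below.

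To pass from this idealized recursion to the sampled one actually executed by \Cref{alg:ERC}, I would write the one-sample target $r_t+\gamma Q_t(s_{t+1},a_{t+1})$ and the minibatch estimate used to form $C_t$ as their population values plus a noise term $w_t$, check $\mathbb{E}[w_t\mid\mathcal{F}_t]=0$ and $\mathbb{E}[\|w_t\|^2\mid\mathcal{F}_t]\le K(1+\|Q_t\|_\infty^2)$ --- which holds because rewards are bounded and, by the contraction of $\mathcal{T}$ together with the step-size conditions, the iterates stay bounded. The recursion then meets the hypotheses of the stochastic-approximation lemma of \citet{convergence}, yielding $Q^{k}\to Q^{\pi}_{\mathrm{reg}}$ almost surely as $k\to\infty$, which is the claim.

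I expect the contraction step to be the main obstacle: the mean-subtraction term is not itself nonexpansive and can enlarge sup-norm distances, so one must bound its Lipschitz constant carefully and restrict $\beta$ to the range where the combined modulus $\kappa$ stays strictly below $1$; a secondary technical point is justifying boundedness of the iterates and the conditional-variance bound on $w_t$ so that the stochastic-approximation lemma genuinely applies, and being careful that the expectation defining $C_t$ is taken with respect to a fixed distribution so that $\mathcal{T}$ is well defined.
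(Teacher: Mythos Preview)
Your argument follows the same skeleton as the paper's: absorb $(1+\beta)$ into the step size, rewrite \cref{app eq: ERC update rule tabular setting} as a standard recursion with a modified target, and invoke the stochastic-approximation machinery of \citet{convergence}. The paper, however, takes a shorter route at the key step: rather than defining an operator $\mathcal{T}$ and proving it contracts, it folds the correction term into a modified reward $r_{\mathrm{erc}}(s,a) := r(s,a) + \tfrac{\beta}{1+\beta}\,\mathbb{E}_{s,a}[Q^k - \mathcal{B}Q^k]$ (quietly dropping the factor $2$ from $C_t$) and then appeals in one line to ``standard convergence results for policy evaluation'' from \citet{convergence,rl}, leaving the $Q^k$-dependence of that reward unaddressed. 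Your version is more careful on exactly this point: by treating the whole target as a fixed deterministic operator on $Q$-functions and explicitly bounding its sup-norm Lipschitz constant, you extract the sufficient condition $\beta<\tfrac{1-\gamma}{1+3\gamma}$ for contraction and then identify the unique fixed point with $Q^\pi$. This restriction on $\beta$ is not stated in the paper but is genuinely needed for your contraction argument; the paper's brevity simply sidesteps it. Both routes land on the same conclusion, but yours supplies the contraction, fixed-point identification, and noise/boundedness checks that the paper's proof leaves implicit.
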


\begin{proof}
According to \cref{app theorem: ERC update rule}, the update rule of ERC can be rewritten as 

\begin{equation}\label{app eq: rewrite erc update rule}
\begin{aligned}
        Q^{k+1} & \leftarrow (1- \alpha_t (1+\beta) )  Q^k + \alpha_t (1+\beta) \big( \mathcal{B}Q^k -  \frac{\beta}{1+\beta} C^k \big) \\
        & \leftarrow (1- \alpha_t (1+\beta) )  Q^k + \alpha_t (1+\beta) \big( r(s,a) + \gamma \mathbb{E}_{s', a'}Q_t(s', a')  -  \frac{\beta}{1+\beta} C^k \big) \\
        & \leftarrow (1- \alpha_t (1+\beta) )  Q^k + \alpha_t (1+\beta) \big( r(s,a) + \frac{\beta}{1+\beta} \mathbb{E}_{s,a}[Q^k -\mathcal{B} Q^k] \\
        & \quad +  \gamma \mathbb{E}_{s', a'}Q^k(s', a') \big)
\end{aligned}
\end{equation}
From \cref{app eq: rewrite erc update rule}, the update target in ERC is $ r(s,a) + \gamma \mathbb{E}_{s', a'}Q_t(s', a')  -  \frac{\beta}{1+\beta} C^k $. 
    Define the 1-eigensubspace regularized reward as $r_{erc}(s_t,a_t) \coloneqq r(s,a) + \frac{\beta}{1+\beta} \mathbb{E}_{s,a}[Q^k -\mathcal{B}^\pi Q^k]$, and the update rule of ERC can be further rewritten as 

\begin{equation}
    Q(s,a) \leftarrow r_{erc}(s,a) + \gamma \mathbb{E}[Q(s', a')],
\end{equation}
    and apply the standard convergence results for policy evaluation \citep{convergence,rl}, which concludes the proof. 
    \hfill$\square$
\end{proof}


\subsection{ERC*}\label{app sec: erc*}
In the text, ERC*, which leverages optimal $Q^*$ to push approximation error to 1-eigensubspace, is compared to ERC and TD in \cref{fig: distance 2 eigensubspace and approximation error}. We give a detailed description of the ERC* in this section. Following \cref{lemma: ERC in tabular form}, the update rule of ERC* can be given as 

\begin{equation}
    Q_{t+1} = (1- \alpha_t (1+\beta) )  Q_t + \alpha_t \mathcal{T}Q_t +  \alpha_t\beta (Q^* + \mathbb{E}(Q_t - Q^*) ).
\end{equation}

The procedure of the proof is the same as in \cref{lemma: ERC in tabular form}.
\section{Additional Details Regarding Experiments}
In this section, we provide a detailed description of the experimental procedures and configurations used to generate the tables and figures presented in the main text. 

\textbf{Implementations.} To ensure the validity and reproducibility of the experiments, we have fixed all random seeds, including but not limited to those used in PyTorch, Numpy, Gym, Random, and CUDA packages, across all experiments. For random seeds, unless otherwise specified, we evaluate each tested algorithm over 10 fixed random seeds. For more implementation details, please refer to our  \href{https://sites.google.com/view/erc-ecml23/}{code}.

\textbf{Comparison with REDQ and TQC.} Both REDQ and TQC leverage an ensemble mechanism to obtain a more accurate and unbiased Q estimation. Specifically, REDQ uses 10 critics and the UTD ratio is 20, i.e., REDQ performs gradient updates twenty times for every interaction with the environment. As for TQC, it uses 5 distributional critics. However, ERC does not leverage any ensemble critic, distributional value functions, or high UTD ratio, yet it still outperforms them.

\textbf{\Cref{fig: distance 2 eigensubspace and approximation error}.} The experiments are conducted on \href{https://github.com/openai/gym/blob/master/gym/envs/toy_text/frozen_lake.py}{FrozenLake-v1 environment}. The true Q values are computed through the Monte Carlo method. The shaded area represents a standard deviation over trials. In the case of no additional description, the shaded areas carried in all subsequent figures indicate a standard deviation over ten random seeds. The environment used in the experiment has 16 states and 4 actions. The learning rate is 0.01. The discounted factor is 0.9. A fixed $\beta$ value of 0.3 is used for both ERC and ERC*.

\textbf{\Cref{fig: ERC performance}.} The performance curves for OpenAI gym continuous control tasks on the DeepMind Control suite. The shaded region represents a 50\% standard deviation of the average evaluation over 10 seeds. and the curves are smoothed with a moving average window of size 10. The evaluation of each algorithm is conducted over a period of 1 million timesteps, with the average return of the algorithm being evaluated every 10k timesteps over ten episodes.

\textbf{\Cref{fig: dispersion 4}.}  The shaded region in these figures also represents a 50\% standard deviation of the average evaluation over 10 seeds and the curves are smoothed with a moving average window of size 10. The evaluation of each algorithm is conducted over a period of 1 million timesteps, with the index of dispersion of the algorithm being evaluated every 10k timesteps over ten episodes. For a better visualization effect, the y-axis is clipped to $[-1,5]$.

\textbf{\Cref{fig: bias 4}.} The experimental settings are consistent with those of \cref{fig: dispersion 4}.

\textbf{\Cref{table: ERC performance statistics}.} Average Return after 1M timesteps training on DMControl suite. $\pm$ indicates a standard deviation over ten trials. The DMControl suite benchmarks contain 28 tasks, and our evaluation of the algorithms was conducted on 26 tasks, with the tasks `acrobot-swingup\_sparse' and `manipulator-bring\_ball' excluded due to the inability of all tested DRL algorithms to produce meaningful results on these challenging tasks.

\textbf{\Cref{table: ERC ablation}.} The experimental settings are consistent with those of \cref{table: ERC performance statistics}.

\section{Additional Experimental Results}

In this section, we present additional experimental results.

\subsection{The Inherent Path in Practice}
\Cref{sec: dynamics of bellman error} discusses the inherent path of approximation error. Does this occur in the practical scene? To empirically show the path of the approximation error, we visualize the path of approximation error given a fixed policy. The results are given in \cref{app fig: dynamics td}, where we perform experiments on \href{https://github.com/openai/gym/blob/master/gym/envs/toy_text/frozen_lake.py}{FrozenLake-v1 environment} since the true value $Q^*$ can be evaluated by the MC Method. For comparison, we also visualize the path of value function learning by the Monte Carlo method. The approximation error of the TD method is optimized toward the 1-eigensubspace before ultimately converging to zero rather than directly toward the optimum that the Monte Carlo (MC) method acts. The inherent path in \cref{app fig: sub dynamics td} is consistent with \cref{theorem: path of td method}. Thus, such a path is non-trivial and motivates us to improve the value approximation by guiding the Bellman error to 1-eigensubspace, resulting in an efficient and robust path. 

\begin{figure}[!htbp]
\centering
\hspace{-0.2in}
\subcaptionbox{\label{app fig: sub dynamics td}Path of TD method}
{\includegraphics[width=0.5\textwidth]{figure/dynamics_td.pdf}}
\hspace{-0.1in}
\subcaptionbox{\label{app fig: sub dynamics mc}Path of MC method}
{\includegraphics[width=0.5\textwidth]{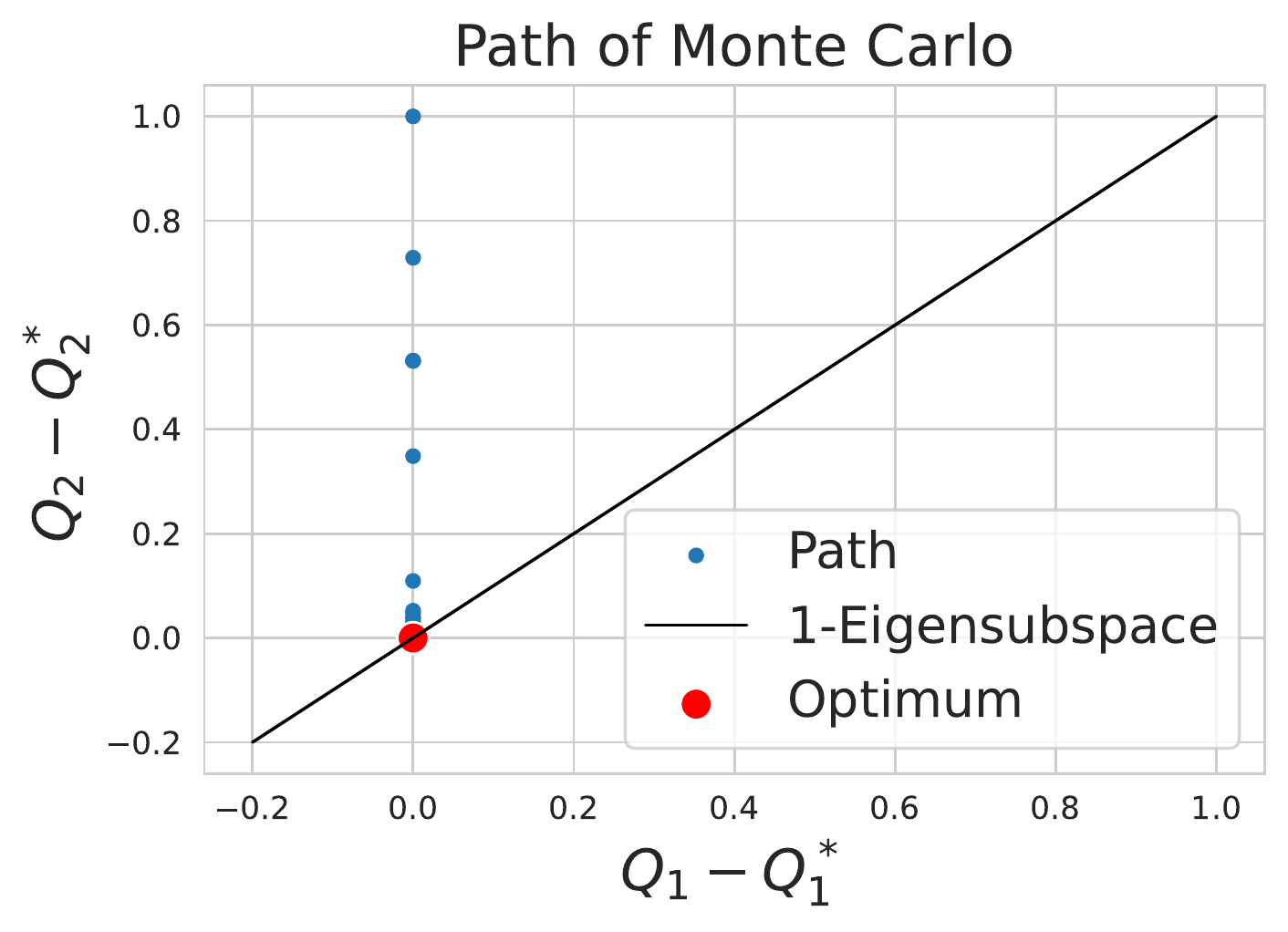}}
\caption{\label{app fig: dynamics td}Value approximation path for TD and MC methods. (a) illustrates that there exists an inherent path that approximation error approaches 1-eigensubspace before converging to zero. The empirical fact is consistent with our theoretical analysis in \cref{theorem: path of td method}. (b) demonstrates the path of the Value approximation error for the MC method. The value function approaches the true value with the shortest path because the MC method obtains the unbiased true value of the objective by a large number of samples.}
\vspace{-15pt}
\end{figure}

\subsection{Value Approximation on Additional Environment}
In the text, we show the value approximation process for various algorithms in \cref{fig: distance 2 eigensubspace and approximation error} on FrozenLake-v1 environment. Now We offer the Value function approximation process for multiple methods on \href{https://www.gymlibrary.dev/environments/toy_text/cliff_walking/}{CliffWalking-v0} environment in \cref{app fig: distance 2 eigensubspace and approximation error}.

\begin{figure}[!t]
\centering 
\subcaptionbox{\label{app fig: sub distance2eigensubspace}Distance to 1-eigensubspace}
{\includegraphics[width=0.5\textwidth]{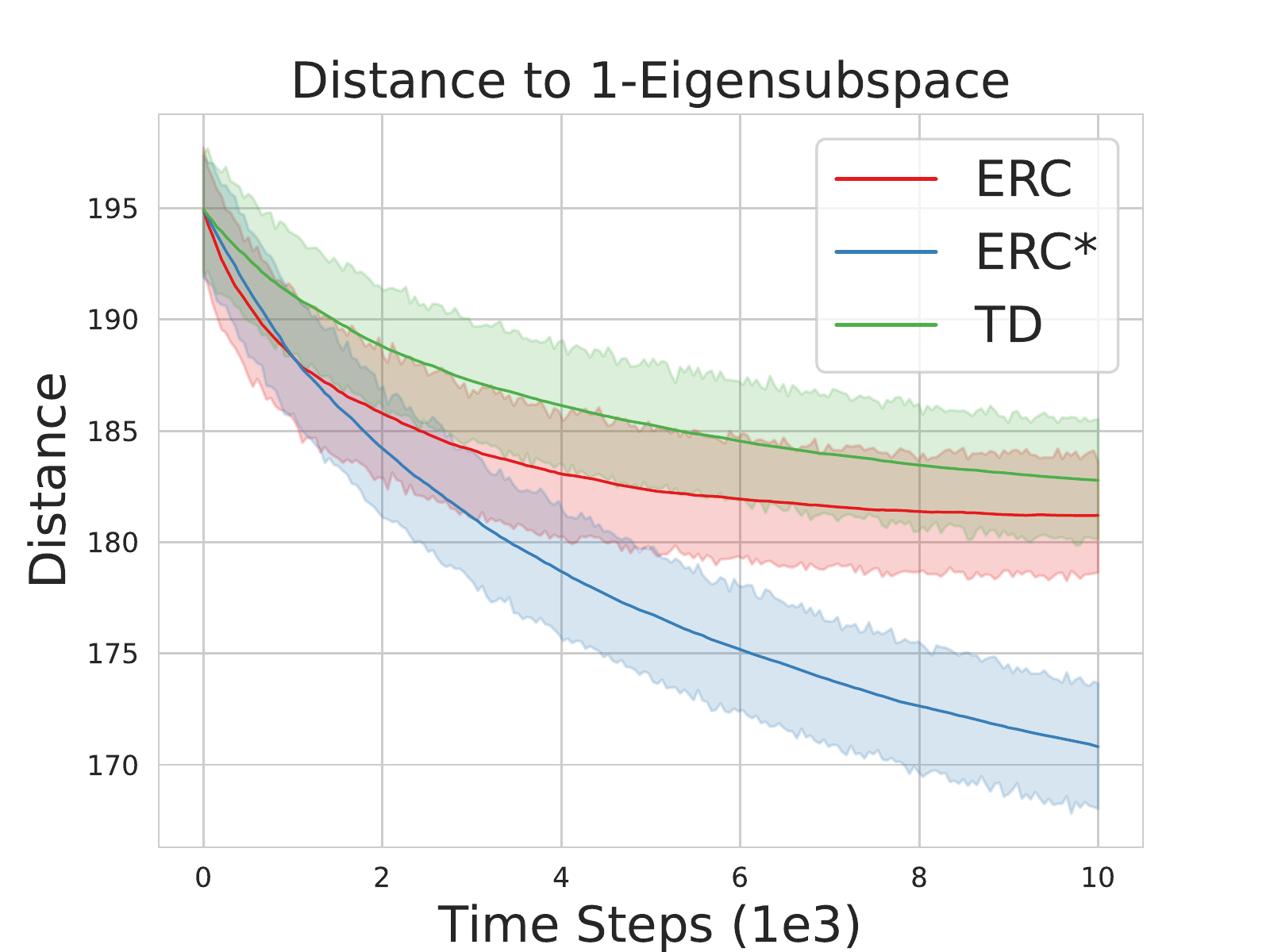}}
\hspace{-0.2in}
\subcaptionbox{\label{app fig: sub Approximation_Error}Approximation error}
{\includegraphics[width=0.5\textwidth]{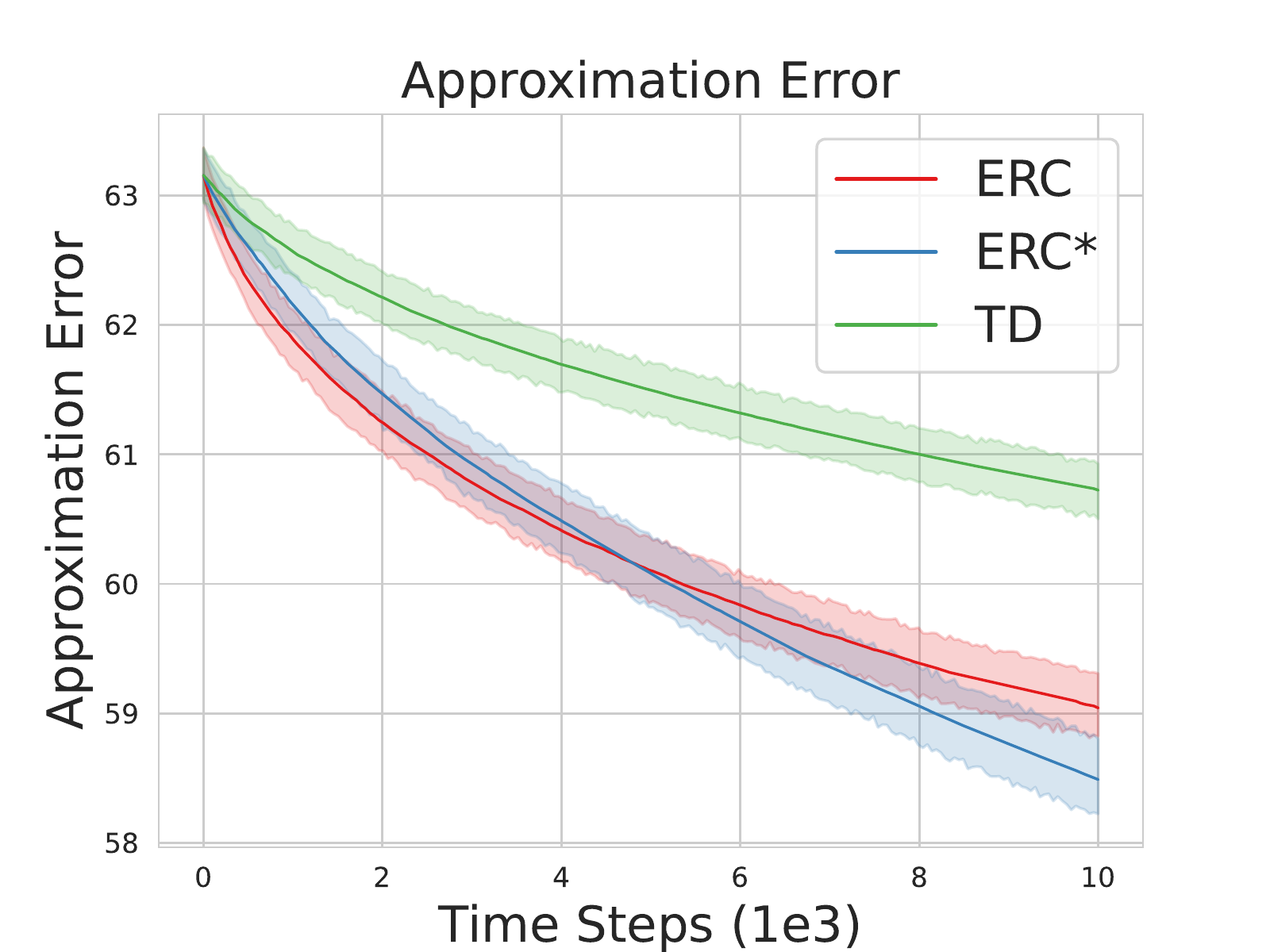}}
\caption{\label{app fig: distance 2 eigensubspace and approximation error}Value function approximation process for various methods on \href{https://www.gymlibrary.dev/environments/toy_text/cliff_walking/}{CliffWalking-v0} environment. (a) illustrates the distance between the approximation error and the 1-eigensubspace for various methods, where ERC* denotes ERC utilizing an oracle $Q*$ to push the approximation error towards the 1-eigensubspace. The results demonstrate that the ERC method is closer to the 1-eigensubspace at the same time compared to the TD method. Besides, the ERC* is closer to 1-eigensubspace because it leverages oracle true $Q^*$. (b) represents the absolute approximation error for various algorithms. The result illustrates that the ERC method has a smaller approximation error at the same time than the TD method. ERC is close to ERC* in this metric, which means that although ERC* is more efficient near the 1-eigensubspace, the performance of the ERC method is also almost the same as ERC* in terms of the approximation error metric that we really care about. The shaded area represents a standard deviation over ten trials.}
\vspace{-15pt}
\end{figure} 

\subsection{Additional Figures}
Due to space limitations, we present only selective figures in the text. In this section, we give the remaining figures for the 26 tasks.

\Cref{app fig: ERC performance 18} presents the learning curves from scratch. \Cref{app fig: bias 22} shows approximation error curves. \Cref{app fig: dispersion 22} is index of dispersion curves. We note that TQC has the best approximation error on `cartpole-swingup\_sparse' task. This is because TQC outperforms the other algorithms substantially on this task as well.

\begin{figure*}[!htp]
	\centering
\scalebox{1}{
\includegraphics[width=1.0 \textwidth]{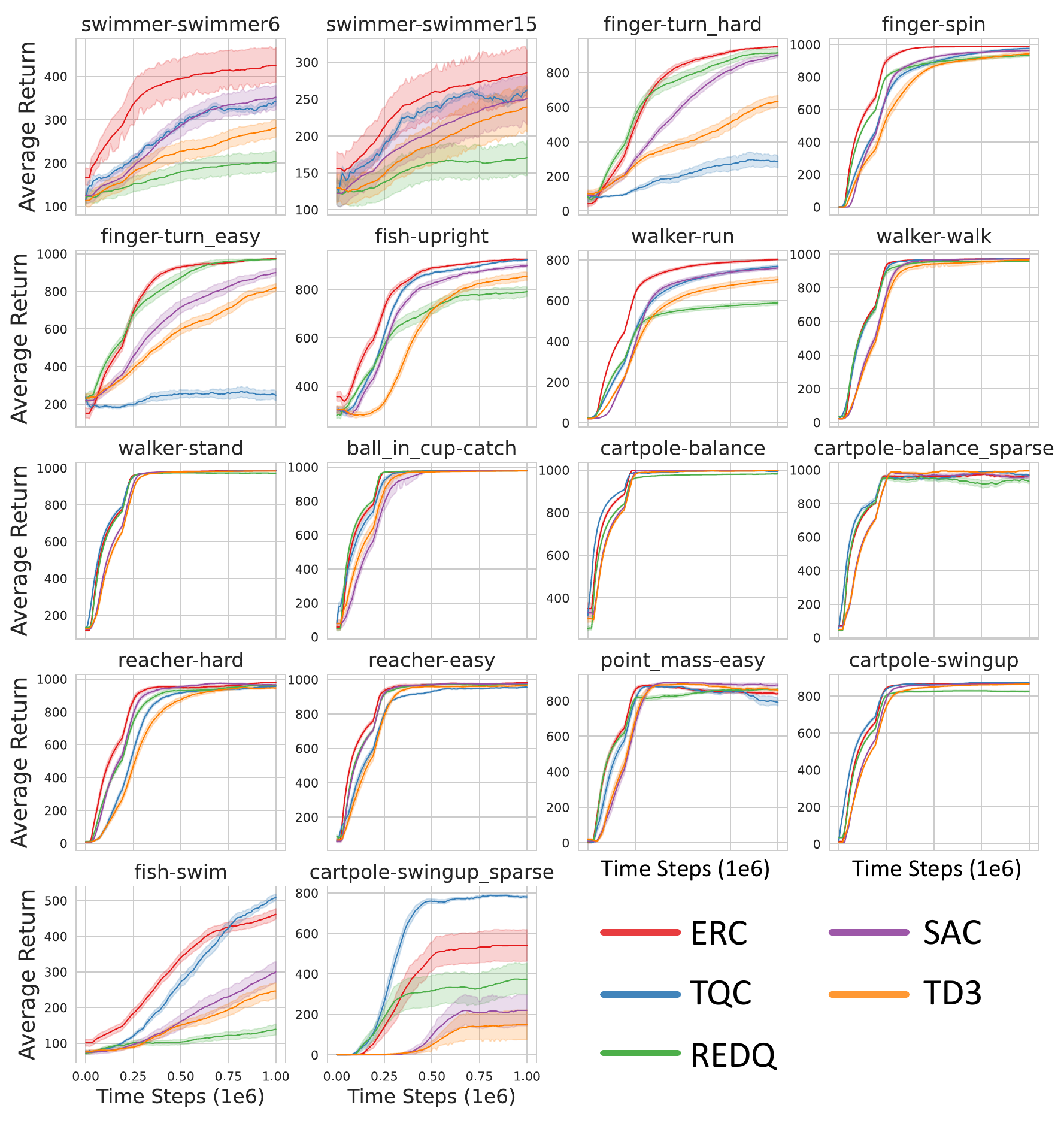}}
\vspace{-0.2in}
	\caption{\label{app fig: ERC performance 18}Performance curves for OpenAI gym continuous control tasks on DeepMind Control suite. The proposed algorithm, ERC, is observed to significantly outperform the other tested algorithms. The shaded region represents half of the standard deviation of the average evaluation over 10 seeds. The curves are smoothed with a moving average window of size ten.}
 \vspace{-0.2in}
\end{figure*}

\begin{figure*}[!htp]
	\centering
\scalebox{1}{
\includegraphics[width=1.0 \textwidth]{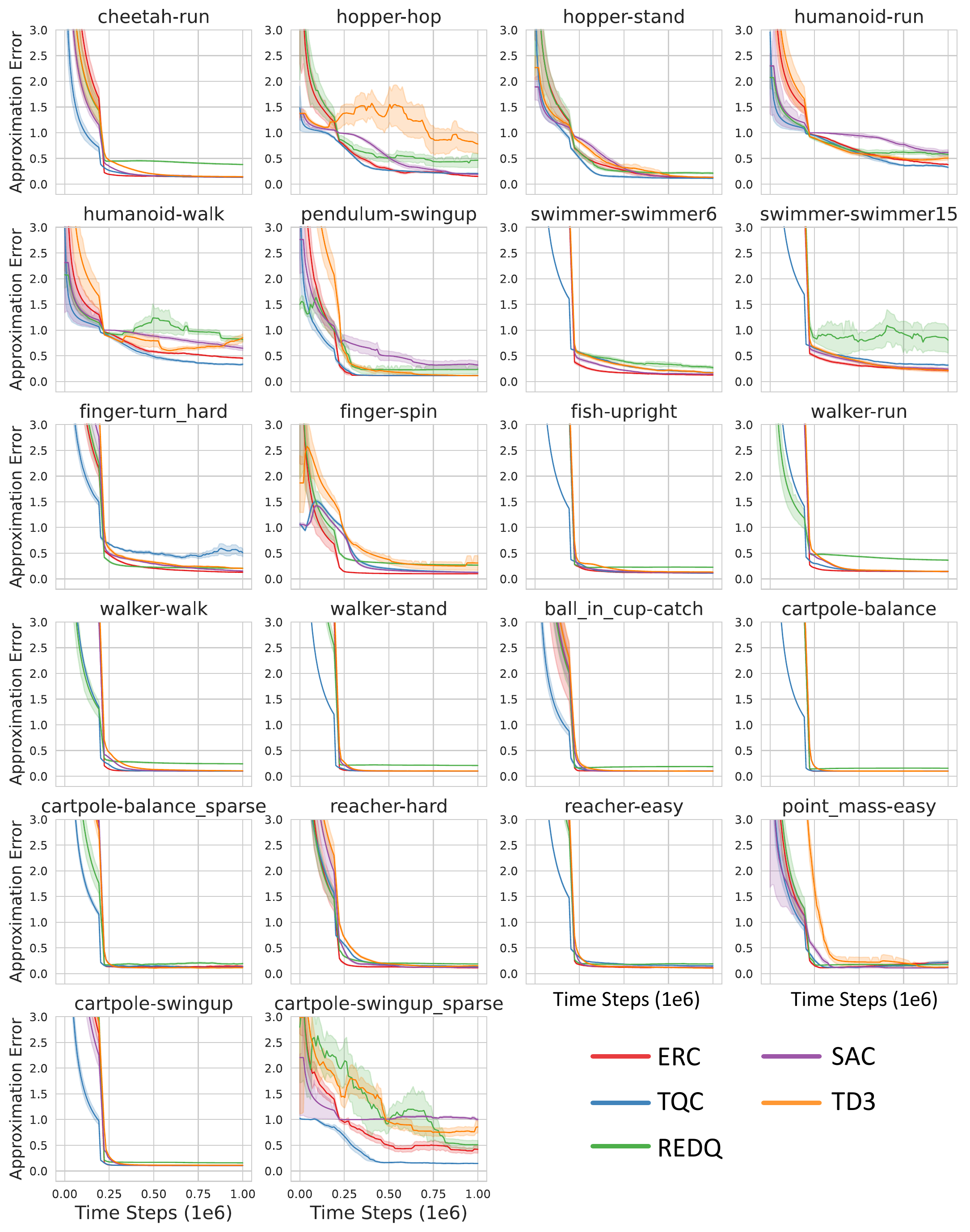}}
\vspace{-0.2in}
\caption{\label{app fig: bias 22}Approximation error curves.  The results demonstrate that the approximation error of ERC is empirically minimal when compared to other algorithms (such as TQC and REDQ) that are specifically designed to obtain accurate unbiased value estimation. }
\end{figure*}

\begin{figure*}[!htp]
	\centering
\scalebox{1}{
\includegraphics[width=1.0 \textwidth]{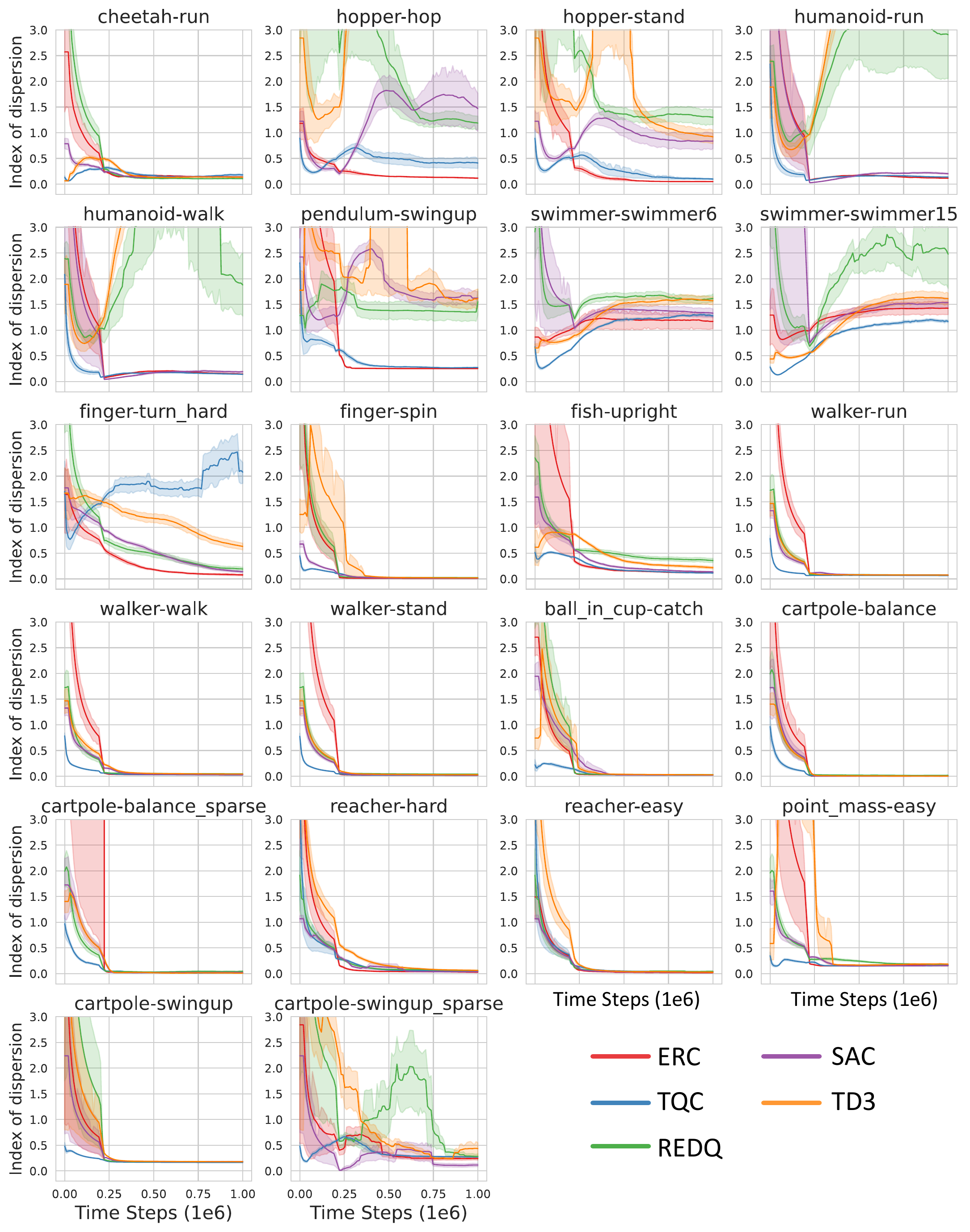}}
\vspace{-0.2in}
\caption{\label{app fig: dispersion 22}Index of dispersion curves. The results demonstrate that ERC effectively controls the variance. Furthermore, the variance of ERC is observed to be minimal on the selected tasks, even in situations where the performance of ERC is not as good as TQC and REDQ}
 \vspace{-0.2in}
\end{figure*}

\end{document}